\newcolumntype{H}{>{\setbox0=\hbox\bgroup}c<{\egroup}@{}}
\newtheorem{lemma}[]{Lemma}
\newcommand{\PIP}[1]{\mathrm{PIP}(#1)}
\newcommand{\newcite}[1]{\citeauthor{#1}~[\citeyear{#1}]}
\title{Learning Meta Word Embeddings by Unsupervised Weighted \\Concatenation of Source Embeddings}
\author{
Danushka Bollegala
\affiliations
University of Liverpool, Amazon
\emails
\texttt{danushka@liverpool.ac.uk}
}
\date{}
\begin{document}
\maketitle

\begin{abstract}
	Given multiple source word embeddings learnt using diverse algorithms and lexical resources, meta word embedding learning methods attempt to learn more accurate and wide-coverage word embeddings. 
	Prior work on meta-embedding has repeatedly discovered that simple vector concatenation of the source embeddings to be a competitive baseline. 
	However, it remains unclear as to why and when simple vector concatenation can produce accurate meta-embeddings. 
	We show that weighted concatenation can be seen as a spectrum matching operation between each source embedding and the meta-embedding, minimising the pairwise inner-product loss.
	Following this theoretical analysis, we propose two \emph{unsupervised} methods to learn the optimal concatenation weights for creating meta-embeddings from a given set of source embeddings.
	Experimental results on multiple benchmark datasets show that the proposed weighted concatenated meta-embedding methods outperform previously proposed meta-embedding learning methods.
\end{abstract}

\section{Introduction}
\label{sec:intro}

Distributed word representations have shown impressive performances in multiple, diverse, downstream NLP applications when used as features~\cite{Huang:ACL:2012,Milkov:2013,Pennington:EMNLP:2014}. 
The learning objectives, optimisation methods as well as the lexical resources used in these word embedding learning methods vary significantly, resulting in a diverse set of word embeddings that capture different aspects of lexical semantics such as polysemy~\cite{neelakantan-EtAl:2014:EMNLP2014,Reisinger:NAACL:2010,Huang:ACL:2012} and morphology~\cite{cotterell-schutze-eisner:2016:P16-1}.
Meta-embedding has emerged as a promising solution to combine diverse pre-trained \emph{source} word embeddings for the purpose of producing \emph{meta} word embeddings that preserve the complementary strengths of individual source word embeddings.
The input and output word embeddings to the meta-embedding algorithm are referred respectively as the source and meta-embeddings.

The problem setting of meta-embedding learning differs from that of source word embedding learning in several important aspects.
In a typical word embedding learning scenario, we would randomly initialise the word embeddings and subsequently update them such that some goodness criterion is optimised such as predicting the log co-occurrences in Global Vectors (\textbf{GloVe})~\cite{Pennington:EMNLP:2014} or likelihood in skip-gram with negative sampling (\textbf{SGNS})~\cite{Milkov:2013}.
Therefore, the source word embedding methods can significantly differ in their training objectives and optimisation methods being used.
On the other hand, for a meta-embedding learning method to be generalisable to different source embedding learning methods, it must be agnostic to the internal mechanisms of  the source embedding learning methods.
Moreover, the vector spaces as well as their optimal dimensionalities will be different for different source embeddings, which makes it difficult to directly compare source embeddings.

Despite the above-mentioned challenges encountered in meta-embedding learning, it has several interesting properties.
From a practitioners point-of-view, meta-embedding is attractive because it obviates the need to pick a single word embedding learning algorithm, which can be difficult because different word embedding learning algorithms perform best in different downstream NLP tasks under different settings~\cite{Levy:TACL:2015}.
Moreover, meta-embedding learning does not require the original linguistic resources (which might not be publicly available due to copyright issues) used to learn the source word embeddings, and operates directly on the (often publicly available) pre-trained word embeddings.
Even in cases where the original linguistic resources are available, retraining source word embeddings from scratch can be time consuming and require specialised hardware.

Given a set of source embeddings, a simple yet competitive baseline for creating their meta-embedding is to concatenate the source embeddings~\cite{Bollegala:IJCAI:2018,Yin:ACL:2016,AAAI:2016:Goikoetxea}.
Concatenation has been justified in prior work as a method that preserves the information contained in individual sources in their corresponding vector spaces.
However, this explanation has no theoretical justification and it is unclear how to concatenate source embeddings with different accuracies and dimensionalities, or what losses are being minimised.  

\paragraph{Contributions:}
For word embedding methods that can be expressed as matrix factorisations, we show that their concatenated meta-embedding minimises the \emph{Pairwise Inner Product} (\textbf{PIP})~\cite{Yin:2018} loss between the sources and an ideal meta-embedding.
Specifically, we show that PIP loss can be decomposed into a \emph{bias} term that evaluates the amount of information lost due to meta-embedding and a series of \emph{variance} terms that account for how source embedding spaces should be aligned with the ideal meta-embedding space to minimise the PIP loss due to meta-embedding.
Our theoretical analysis extends the bias-variance decomposition of PIP loss for word embedding~\cite{Yin:2018} to meta-embedding.

Motivated by the theory, we propose two \emph{unsupervised} methods to learn the optimal concatenation weights by aligning the spectrums of the source embeddings against that of an (estimated) ideal meta-embedding matrix.
In particular, no labelled data for downstream tasks are required for learning the optimal concatenation weights.
We propose both source-weighted  and dimension-weighted concatenation methods.
In particular, the dimension-weighted meta-embedding learning method consistently outperforms prior proposals in a range of downstream NLP tasks
such as word similarity prediction, analogy detection,  part-of-speech tagging, sentiment classification, sentence entailment and semantic textual similarity prediction for various combinations of source embeddings.
The source code for reproducing the results reported in this paper is publicly available.\footnote{\url{https://github.com/LivNLP/meta-concat}}

\section{Related Work}
\label{sec:related}

\newcite{Yin:ACL:2016} proposed 1\texttt{TO}N,
 by projecting source embeddings to a common space via source-specific linear transformations.
This method minimises squared $\ell_{2}$ distance between the meta and each source embedding assuming a common vocabulary. 
1\texttt{TO}N+ overcomes this limitation by learning pairwise linear transformations between two given sources for predicting the embeddings for out of vocabulary (OOV) words.
Both of these methods can be seen as \emph{globally-linear} transformations because \emph{all} the words in a particular source are projected to the meta-embedding space using the \emph{same} transformation matrix.
In contrast, \emph{locally-linear} meta-embedding (LLE)~\cite{Bollegala:IJCAI:2018} computes the nearest neighbours for a particular word in each source and then represent a word as the linearly-weighted combination of its neighbours.
Next, meta-embeddings are learnt by preserving the neighbourhood constraints.
This method does not require all words to be represented by all sources, thereby obviating the need to predict missing source embeddings for OOVs.

\newcite{Bao:COLING:2018} modelled meta-embedding learning as an \emph{autoencoding} problem where information embedded in different sources are integrated at different levels to propose three types of meta-embeddings: Decoupled Autoencoded Meta-Embedding (DAEME) (independently encode each source and  concatenate), Concatenated Autoencoded Meta-Embedding (CAEME) (independently decode meta-embeddings to reconstruct each source), and Averaged Autoencoded Meta-Embedding (AAEME) (similar to DAEME but instead of concatenation use average).
In comparison to methods that learn globally or locally linear transformations~\cite{Bollegala:IJCAI:2018,Yin:ACL:2016}, autoencoders learn nonlinear transformations.
\newcite{Neill:2018} further extend this approach using squared cosine proximity loss as the reconstruction loss. 

Vector concatenation has been used as a baseline for producing 
meta-embeddings~\cite{Yin:ACL:2016}.
\newcite{AAAI:2016:Goikoetxea} concatenated independently learnt word embeddings from a corpus and the WordNet.
Moreover, applying Principal Component Analysis on the concatenation further improved their performance on similarity tasks.
Interestingly, \newcite{Coates:NAACL:2018} showed that accurate meta-embeddings can be produced by averaging source embeddings that exist in \emph{different} vector spaces. 
Recent work~\cite{He:2020,jawanpuria-etal-2020-learning} show that learning orthogonal transformations prior to averaging can further improve accuracy.

\newcite{kiela-etal-2018-dynamic} proposed a dynamically weighted meta-embedding method for representing sentences by projecting each source by a source-specific matrix to a common vector space such that the projected source embeddings can be added after multiplying an attention weight. 
They consider contextualised word embeddings given by the concatenated forward and backward hidden states of a BiLSTM. 
The attention weights are learnt using labelled data for sentence-level tasks (i.e. sentiment classification and textual entailment). 
However, their method differs from all previously discussed meta-embedding learning methods in two important ways:
 it (a) meta-embeds contextualised word embeddings and 
(b) requires labelled data for sentence-level tasks and is more appropriate for creating sentence-level supervised meta-embeddings than word-level unsupervised meta-embeddings.

Both contextualised embeddings and sentence-level meta-embeddings are beyond the scope of this paper, which focuses on context-independent (static) word-level meta-embedding.
Although theoretical analysis of static word embedding learning exists~\cite{Arora:TACL:2016,li-etal-2015-generative,Hashimoto:TACL:2016}, to the best of our knowledge, we are the first to provide a theoretical analysis of concatenated meta-embedding learning.

\section{Meta-Embedding by Concatenation}
\label{sec:wc}

Let us consider a set of $N$ source word embeddings $s_{1}, s_{2}, \ldots, s_{N}$,
 all covering a common vocabulary\footnote{Missing embeddings can be predicted using, for example, linear projections as in 1\texttt{TO}N+.} $\cV$ of $n$ words (i.e. $|\cV| = n$).
The embedding of a word $w$ in $s_{j}$ is denoted by $\vec{s}_{j}(w) \in \R^{k_{j}}$, where $k_{j}$ is the dimensionality of $s_{j}$.
We represent $s_{j}$ by an embedding matrix $\mat{E}_{j} \in \R^{n \times k_{j}}$.
For example, $\mat{E}_{1}$ could be the embedding matrix obtained by running SGNS on a corpus, whereas $\mat{E}_{2}$ could be that obtained from GloVe etc.
Then, the problem of meta-embedding via weighted concatenation can be stated as -- \emph{what are the optimal weights to concatenate $\mat{E}_{1}, \ldots, \mat{E}_{n}$ row-wise such that some loss that represents the amount of information we lose due to meta-embedding is minimised?}

\subsection{Two Observations}
\label{sec:observations}

We build our analysis on two main observations. 
\textbf{First}, we note that word embeddings to be unitary-invariant. 
Unitary invariance is empirically verified in prior work~\cite{artetxe-labaka-agirre:2016:EMNLP2016,hamilton-leskovec-jurafsky:2016:P16-1,Smith:ICLR:2017} and states that two source embeddings are identical if one can be transformed into the other by a unitary matrix. 
Unfortunately the dimensions in different source embeddings cannot be directly compared~\cite{Bollegala:PLoS:2017}.
To overcome this problem, \newcite{Yin:2018} proposed the Pairwise Inner-Product (PIP) matrix, $\mathrm{PIP}(\mat{E})$, of $\mat{E}$ given by \eqref{eq:pip} to compare source embeddings via their inner-products over the word embeddings for the same set of words. 
\begin{align}
\label{eq:pip}
\PIP{\mat{E}} = \mat{E}\mat{E}\T
\end{align}
If the word embeddings are normalised to unit $\ell_{2}$ length, $\mat{E}_{j}\mat{E}_{j}\T$ becomes the pairwise word similarity matrix.
PIP loss between two embedding matrices $\mat{E}_{1}, \mat{E}_{2}$ is defined as the Frobenius norm of the difference between their PIP matrices:
\par\nobreak
{\small
\vspace{-5mm}
\begin{align}
 \label{eq:pip-loss}
 \norm{\PIP{\mat{E}_{1}} \! - \! \PIP{\mat{E}_{2}}}_{F} = \norm{\mat{E}_{1}\mat{E}_{1}\!\!\T  \! - \! \mat{E}_{2}\mat{E}_{2}\!\!\T}_{F}
\end{align}
}%
PIP loss enables us to compare word emebddings with different dimensionalities, trained using different algorithms and resources, which is an attractive property when analysing meta embeddings. 

\noindent
\textbf{Second}, we observe that many word embedding learning algorithms such as Latent Semantic Analysis (LSA)~\cite{LSA}, GloVe, SGNS, etc. can be written as either an explicit or an implicit low-rank decomposition of a suitably transformed co-occurrence matrix, computed from a corpus~\cite{li-etal-2015-generative,Dhillon:2015}. 
For example, LSA applies Singular Value Decomposition (SVD) on a Positive Pointwise Mutual Information (PPMI) matrix, GloVe decomposes a log co-occurrence matrix, and SGNS implicitly decomposes a Shifted PMI (SPMI) matrix.

More generally, if $\mat{M}$ is a signal matrix that encodes information about word associations (e.g. a PPMI matrix) and $\mat{M} = \mat{U}\mat{D}\mat{V}\T$ be its  SVD, then a $k$-dimensional embedding matrix $\mat{E}$ of $\mat{M}$ is given by $\mat{E} = \mat{U}_{1:k}\mat{D}^{\alpha}_{1:k,1:k}$ for some $\alpha \in [0,1]$ by selecting the largest $k$ singular values (in the diagonal matrix $\mat{D}$) and corresponding left singular vectors (in the unitary matrix $\mat{U}$)~\cite{Turney:JAIR:2012}.
Setting $\alpha = 0.5$ induces symmetric target and context vectors, similar to those obtained via SGNS or GloVe and has been found to be empirically more accurate~\cite{Levy:NIPS:2014}. 
The hyperparameter $\alpha$ was found to be controlling the robustness of the embeddings against over-fitting~ \cite{Yin:2018}~(See  Supplementary for a discussion of $\alpha$).
Given, $\alpha$ and $k$, a source word embedding learning algorithm can then be expressed as a function 
$ \mat{E} = f_{\alpha, k}(\mat{M}) =  \mat{U}_{1:k}\mat{D}^{\alpha}_{1:k,1:k}$ that returns an embedding matrix $\mat{E}$ for an input signal matrix $\mat{M}$.

Having stated the two observations we use to build our analysis, next we propose two different approaches for constructing meta-embeddings as the weighted concatenation of source embeddings.

\subsection{Source-weighted Concatenation}
\label{sec:ss}

\newcite{Yin:ACL:2016} observed that it is important to emphasise accurate source embeddings during concatenation by multiplying all embeddings of a particular source by a source-specific weight, which they tune using a semantic similarity benchmark.
Specifically, they compute the \emph{source-weighted} meta-embedding, $\hat{\vec{e}}_{sw}(w) \in \R^{k_{1} + \cdots + k_{N}}$, of a word $w \in \cV$ using \eqref{eq:ss}, where $\oplus$ denotes the vector concatenation.
\begin{align}
\label{eq:ss}
\hat{\vec{e}}_{sw}(w) &= c_{1}\vec{s}_{1}(w)\oplus  \ldots \oplus c_{N} \vec{s}_{N}(w) \nonumber \\
& = \oplus_{j=1}^{N} c_{j}\vec{s}_{j}(w)
\end{align}
Concatenation weights $c_{j}$ satisfy $\sum_{j=1}^{N} c_{j} = 1$.
Then, the source-weighted concatenated meta-embedding matrix, $\hat{\mat{E}}_{sw}$ is given by \eqref{eq:ss-M}.
\begin{align}
\label{eq:ss-M}
\hat{\mat{E}}_{sw} = \bigoplus_{j=1}^{N} c_{j} \mat{E}_{j},
\end{align}
 $\bigoplus$ to denotes the row-wise matrix concatenation.

\subsection{Dimension-wighted Concatenation}
\label{sec:ds}

The number of source embeddings is usually significantly smaller compared to their sum of dimensionalities (i.e. $N\! \ll \! \sum_{j=1}^{N} k_{j}$).
Moreover, the source-weighted concatenation can only adjust the length of each source embedding, and cannot perform any rotations.
Therefore, the flexibility of the source-weighting to produce accurate meta-embeddings is limited.
To overcome these limitations, we propose a \emph{dimension-weighted} concatenation method given by \eqref{eq:ds}.
\begin{align}
	\label{eq:ds}
	\hat{\vec{e}}_{dw}(w) = \oplus_{j=1}^{N} \mat{C}_{j}\vec{s}_{j}(w)
\end{align}
Here, $\mat{C}_{j}$ is a diagonal matrix with $c_{j,1}, \ldots, c_{j,k_{j}}$ in the main diagonal.
We require that for all $j$, $\sum_{i=1}^{k_{j}} c_{j,i} = 1$.
The dimension-weighted concatenated meta-embedding matrix $\hat{\mat{E}}_{dw}$ can be written as follows:
\par\nobreak
{\small
\vspace{-6mm}
\begin{align}
\label{eq:ds-M}
	\hat{\mat{E}}_{dw} = \bigoplus_{j=1}^{N} \mat{E}_{j} \mat{C}_{j} 
\end{align}
}%
Here, we have $\sum_{j=1}^{N }k_{j} (\gg\!N)$ number of parameters at our disposal to scale each dimension of the sources, which is more flexible than the source-weighted concatenation.
Indeed, source-weighting can be seen as a special case of dimension-weighting when
$c_{j} = c_{j,1} = \ldots c_{j,k_{j}}$ for all $j$.

\subsection{Bias-Variance in Meta-Embedding}

Armed with the two key observations in \autoref{sec:observations}, we are now in a position to show how meta-embeddings under source- and dimension weighted concatenation induce a bias-variance tradeoff in the PIP loss. 
Given that source-weighting is a special case of dimension-weighting, we limit our discussion to the latter.
Moreover, for simplicity of the description we consider two sources here but it can be extended any number of sources.\footnote{Indeed we use three sources later in experiments.}

Let us consider the dimension-weighted concatenated meta-embedding $\hat{\mat{E}} = [\mat{E}_{1}\mat{C}_{1}; \mat{E}_{2}\mat{C}_{2}]$ of two source embedding matrices $\mat{E}_{1} \in \R^{n \times k_{1}}$ and $\mat{E}_{2} \in \R^{n \times k_{2}}$ with concatenation coefficient matrices 
$\mat{C}_{1} = \mathrm{diag}(c_{1,1}, \ldots, c_{1,k_{1}})$ and $\mat{C}_{2} = \mathrm{diag}(c_{2,1}, \ldots, c_{2,k_{2}})$.
$\mat{E}_{1}$ and $\mat{E}_{2}$ are obtained by applying SVD on respective signal matrices $\mat{M}_{1}$ and $\mat{M}_{2}$ and are given by
$\mat{E}_{1} = \mat{U}^{(1)}_{\cdot, 1:{k}_{1}} \mat{D}^{(1)\alpha}_{1:{k}_{1}, 1:{k}_{1}}$ and
$\mat{E}_{2} = \mat{U}^{(2)}_{\cdot, 1:{k}_{2}}\mat{D}^{(2)\alpha}_{1:{k}_{2},1:{k}_{2}}$.
The diagonal matrices $\mat{D}^{(1)}_{1:{k}_{1}, 1:{k}_{1}} = \mathrm{diag}(\mu_{1}, \ldots, \mu_{{k}_{1}})$ and $\mat{D}^{(2)}_{1:{k}_{2}, 1:{k}_{2}} = \mathrm{diag}(\nu_{1}, \ldots, \nu_{{k}_{2}})$ contain the top $k_{1}$ and $k_{2}$ singular values of respectively of $\mat{M}_{1}$ and $\mat{M}_{2}$.

Let us assume the exsistence of an oracle that provides us with an \emph{ideal} meta-embedding matrix $\mat{E}  \in \R^{n \times d}$, where $d \geq (k_{1} + k_{2})$ is the dimensionality of this ideal meta-embedding space. 
$\mat{E}$ is ideal in the sense that it has the minimal PIP loss $\norm{\PIP{\mat{E}} - \PIP{\hat{\mat{E}}}}_{F}$ between $\hat{\mat{E}}$ created from source embedding matrices $\mat{E}_{1}$ and $\mat{E}_{2}$.
Following the low-rank matrix decomposition approach, $\mat{E} =  \mat{U}_{\cdot, 1:d}\mat{D}^{\alpha}_{1:d,1:d}$ can be computed using an ideal signal matrix $\mat{M}$, where $\mat{D}_{1:d,1:d} = \mathrm{diag}(\lambda_{1}, \ldots, \lambda_{d})$.
However, in practice, we are not blessed with such oracles and will have to estimate $\mat{M}$ via sampling as described later in \autoref{sec:est}.
Interestingly, in the case of $\hat{\mat{E}}$ constructed by dimension-weighted concatenation, we can derive an upper bound on the PIP loss using the spectral decompositions of $\mat{M}, \mat{M}_{1}$ and $\mat{M}_{2}$ as stated in \autoref{th:main}. 
\begin{theorem}
\label{th:main}
For two source embedding matrices $\mat{E}_{1}$ and $\mat{E}_{2}$, the PIP loss between their dimension-weighted meta-embedding $\hat{\mat{E}} = [\mat{E}_{1}\mat{C}_{1}; \mat{E}_{2}\mat{C}_{2}]$ and an ideal meta-embedding $\mat{E}$ is given by \eqref{eq:PIP-ME}.
\par\nobreak
{\small
\vspace{-3mm}
\begin{align}
\label{eq:PIP-ME}
& \norm{\PIP{\mat{E}} - \PIP{\hat{\mat{E}}}}_{F} \nonumber \\
& \leq \underbrace{\sqrt{\sum_{i=k_{1} + k_{2}}^{d} \lambda_{i}^{4\alpha}}}_{\textrm{bias}} +
 \textcolor{red}{\underbrace{\sqrt{2}\sum_{i=1}^{k_{1}}\left(\lambda_{i}^{2\alpha} - \lambda_{i+1}^{2\alpha}\right)\norm{\mat{U}^{(1)}_{\cdot,1:i}\T \mat{U}_{\cdot,i:n}}}_{\textrm{directional variance in $s_{1}$}}} \nonumber \\
&\!\!+\! \textcolor{red}{\underbrace{\sqrt{\sum_{i=1}^{k_{1}}\left(\lambda_{i}^{2\alpha} \!-\! c^{2}_{1,i}\mu_{i}^{2\alpha} \right)^{2}}}_{\textrm{magnitude variance in $s_{1}$}}} \!+\! 
\textcolor{blue}{\underbrace{\sqrt{\sum_{i=k_{1}+1}^{k_{1}+k_{2}}\left(\lambda_{i}^{2\alpha} \!-\! c^{2}_{2,i-k_{1}} \nu_{i-k_{1}}^{2\alpha}\right)^{2}}}_{\textrm{magnitude variance in $s_{2}$}}} \nonumber \\
&+ \textcolor{blue}{\underbrace{\sqrt{2}\sum_{i=k_{1} + 1}^{k_{1} + k_{2}} \left(\lambda_{i}^{2\alpha} - \lambda_{i+1}^{2\alpha} \right) \norm{\mat{U}^{(2)}_{\cdot,1:i}\T \mat{U}_{\cdot,i:n}}}_{\textrm{directional variance in $s_{2}$}}}
\end{align}
}
\end{theorem}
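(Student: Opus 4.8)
The plan is to exploit the fact that row-wise concatenation turns the PIP (Gram) matrix into a \emph{sum} of per-source Gram matrices, and then to reduce each summand to an instance of the equal-dimensional bias--variance bound of \newcite{Yin:2018}. First I would write out $\hat{\mat{E}} = [\mat{E}_{1}\mat{C}_{1}; \mat{E}_{2}\mat{C}_{2}]$ and use that $\mat{C}_{1},\mat{C}_{2}$ are diagonal (hence symmetric) to get $\PIP{\hat{\mat{E}}} = \hat{\mat{E}}\hat{\mat{E}}\T = \mat{E}_{1}\mat{C}_{1}^{2}\mat{E}_{1}\T + \mat{E}_{2}\mat{C}_{2}^{2}\mat{E}_{2}\T$. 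Substituting the SVD forms $\mat{E}_{1} = \mat{U}^{(1)}_{\cdot,1:k_{1}}\mat{D}^{(1)\alpha}_{1:k_{1},1:k_{1}}$, $\mat{E}_{2} = \mat{U}^{(2)}_{\cdot,1:k_{2}}\mat{D}^{(2)\alpha}_{1:k_{2},1:k_{2}}$ and using that diagonal matrices commute, the first summand becomes $\mat{U}^{(1)}_{\cdot,1:k_{1}}\, \mathrm{diag}(c^{2}_{1,i}\mu_{i}^{2\alpha})\, \mat{U}^{(1)\T}_{\cdot,1:k_{1}}$ (diagonal indexed by $i=1,\dots,k_{1}$), and similarly for the second with $c^{2}_{2,i}\nu_{i}^{2\alpha}$; meanwhile $\PIP{\mat{E}} = \sum_{i=1}^{d}\lambda_{i}^{2\alpha}\vec{u}_{i}\vec{u}_{i}\T$ with $\vec{u}_{i}$ the columns of $\mat{U}$.

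Next I would split the ideal spectrum into the contiguous blocks $\{1,\dots,k_{1}\}$, $\{k_{1}{+}1,\dots,k_{1}{+}k_{2}\}$, $\{k_{1}{+}k_{2}{+}1,\dots,d\}$, match the first two blocks to $s_{1}$ and $s_{2}$, and apply the triangle inequality for $\norm{\cdot}_{F}$. The tail block $\sum_{i=k_{1}+k_{2}+1}^{d}\lambda_{i}^{2\alpha}\vec{u}_{i}\vec{u}_{i}\T$ consists of orthonormal rank-one terms, so its Frobenius norm is exactly $\sqrt{\sum_{i>k_{1}+k_{2}}\lambda_{i}^{4\alpha}}$, which is the bias term.

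For the $s_{1}$ block I then need to bound $\norm{\sum_{i=1}^{k_{1}}\lambda_{i}^{2\alpha}\vec{u}_{i}\vec{u}_{i}\T - \mat{U}^{(1)}_{\cdot,1:k_{1}}\mathrm{diag}(c^{2}_{1,i}\mu_{i}^{2\alpha})\mat{U}^{(1)\T}_{\cdot,1:k_{1}}}_{F}$. I would insert $\pm\,\mat{U}^{(1)}_{\cdot,1:k_{1}}\mathrm{diag}(\lambda_{i}^{2\alpha})\mat{U}^{(1)\T}_{\cdot,1:k_{1}}$ and split by the triangle inequality. The ``diagonal'' piece $\mat{U}^{(1)}_{\cdot,1:k_{1}}\mathrm{diag}(\lambda_{i}^{2\alpha}-c^{2}_{1,i}\mu_{i}^{2\alpha})\mat{U}^{(1)\T}_{\cdot,1:k_{1}}$ has Frobenius norm $\sqrt{\sum_{i=1}^{k_{1}}(\lambda_{i}^{2\alpha}-c^{2}_{1,i}\mu_{i}^{2\alpha})^{2}}$ by orthonormality of $\mat{U}^{(1)}$ — the magnitude variance. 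The ``rotational'' remainder $\sum_{i=1}^{k_{1}}\lambda_{i}^{2\alpha}(\vec{u}_{i}\vec{u}_{i}\T - \vec{u}^{(1)}_{i}\vec{u}^{(1)\T}_{i})$ I would treat by Abel summation-by-parts over the spectral gaps, writing $\lambda_{i}^{2\alpha}=\sum_{j\ge i}(\lambda_{j}^{2\alpha}-\lambda_{j+1}^{2\alpha})$ (with $\lambda_{k_{1}+1}^{2\alpha}:=0$ inside this sub-problem), which turns it into $\sum_{i=1}^{k_{1}}(\lambda_{i}^{2\alpha}-\lambda_{i+1}^{2\alpha})\big(\mat{U}_{\cdot,1:i}\mat{U}_{\cdot,1:i}\T-\mat{U}^{(1)}_{\cdot,1:i}\mat{U}^{(1)\T}_{\cdot,1:i}\big)$; applying the triangle inequality and the projector ($\sin\Theta$) identity $\norm{\mat{P}\mat{P}\T-\mat{Q}\mat{Q}\T}_{F}=\sqrt{2}\,\norm{\mat{P}\T\mat{Q}_{\perp}}_{F}$ for rank-$i$ orthonormal $\mat{P},\mat{Q}$ (a one-line trace computation) yields the directional variance $\sqrt{2}\sum_{i=1}^{k_{1}}(\lambda_{i}^{2\alpha}-\lambda_{i+1}^{2\alpha})\norm{\mat{U}^{(1)\T}_{\cdot,1:i}\mat{U}_{\cdot,i+1:n}}_{F}$. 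The $s_{2}$ block is identical after the re-indexing $i\mapsto i-k_{1}$ that relates the ideal dimensions $k_{1}{+}1,\dots,k_{1}{+}k_{2}$ to $s_{2}$'s own spectrum $\nu_{1},\dots,\nu_{k_{2}}$ and singular vectors $\vec{u}^{(2)}_{1},\dots,\vec{u}^{(2)}_{k_{2}}$. Summing the five contributions gives \eqref{eq:PIP-ME}.

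The hard part will be the bookkeeping in the last step: correctly handling the boundary term of the summation-by-parts at $i=k_{1}$ (resp.\ $i=k_{1}{+}k_{2}$) and the consistent re-indexing of the second source's spectral quantities, so that the per-block pieces line up exactly with the terms displayed in \eqref{eq:PIP-ME}; everything else is orthonormality, the triangle inequality, and the standard $\sin\Theta$ identity. I would also note that optimality of $\mat{E}$ is never used — the argument is an upper bound valid for \emph{any} reference embedding $\mat{E}$ of dimension $d\ge k_{1}+k_{2}$ — so the ``ideal oracle'' only matters later when the bound is turned into an objective to minimise.
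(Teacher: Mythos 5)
Your proposal is correct and is essentially the paper's own proof: the same splitting of $\PIP{\hat{\mat{E}}}$ into per-source Gram matrices, the same three-block partition of the ideal spectrum with the triangle inequality, the same insertion of $\pm\,\mat{U}^{(1)}_{\cdot,1:k_{1}}\mathrm{diag}(\lambda_{i}^{2\alpha})\mat{U}^{(1)\T}_{\cdot,1:k_{1}}$ to separate magnitude from directional variance, and the same telescoping-sum argument combined with the $\sqrt{2}$ projector-difference identity, which is exactly the paper's Lemma~1 proved by the same trace computation. The only differences are cosmetic (your boundary convention in the Abel summation and the $i{+}1{:}n$ versus $i{:}n$ column indexing), so no further comparison is needed.
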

\begin{proof}
See supplementary.
\end{proof}

For symmetric signal matrices such as the ones computed via word co-occurrences, embedding matrices are given by the eigendecomposition of the signal matrices and with real eigenvalues.
$\mat{M}, \mat{M}_{1}, \mat{M}_{2}$ (hence their spectrums) as well as respective optimal dimensionalities $d$, $k_{1}$, $k_{2}$ are unknown but can be estimated  as described in \autoref{sec:est}.
Although we used the same $\alpha$ for the oracle and all sources for simplicity, Theorem~1 still holds when all $\alpha$ are different.

The perturbation bounds on the expected PIP losses are known to be tight~\cite{Vu:2011}.
The first term in the R.H.S. in \eqref{eq:PIP-ME} can be identified as a \emph{bias} due to meta-embedding because as we use more dimensions in the sources (i.e. $k_{1}+k_{2}$) for constructing a meta-embedding this term would become smaller, analogous to increasing the complexity of a prediction model. 
On the other hand, third and fourth terms in the R.H.S. are square roots of the sum of squared differences between the spectra of the ideal meta-embedding and each source embedding, weighted by the concatenation coefficients. 
These terms capture the variance in the magnitudes of the spectra.
Likewise, the second and fifth terms compare the left singular vectors (directions) between each source and the ideal meta-embedding, representing the variance in the directions.

The bias-variance decomposition \eqref{eq:PIP-ME} provides a principled approach for constructing concatenated meta-embeddings.
To minimise the PIP loss, we must \emph{match} the spectra of the sources against the ideal meta-embedding by finding suitable concatenation coefficients such that the magnitude variance terms are minimised. 
Recall that in meta-embedding, the sources are pretrained and given, thus their spectra are fixed.
Therefore, PIP loss \eqref{eq:PIP-ME} is a function only of concatenation coefficients 
for all $c_{1,i}, c_{2,j}$ and their optimal values can be obtained by differentiating w.r.t. those variables.
For example, for the first source, under dimension- and source-weighted concatenations the optimal weights are given respectively by \eqref{eq:sw-1} and \eqref{eq:dw-1}.
\par\nobreak
{\small
\noindent\begin{minipage}{.45\linewidth}
\begin{align} 
c_{1,i} = \frac{\lambda_{i}^{\alpha}}{\mu_{i}^{\alpha}}  \label{eq:sw-1}
\end{align}
\end{minipage}
\begin{minipage}{.45\linewidth}
\begin{align}
c_{1} = \sqrt{\frac{\sum_{i}^{{k}_{1}} \lambda_{i}^{2\alpha}\mu_{i}^{2\alpha}}{\sum_{i=1}^{{k}_{1}} \mu_{i}^{4\alpha}}} \label{eq:dw-1}
\end{align}
\end{minipage}
}

Interestingly, directional variances can be further minimised by, for example, rotating the sources to have orthonormal directions with the ideal meta-embedding.
Indeed, prior work~\cite{He:2020,jawanpuria-etal-2020-learning} have experimentally shown that by applying orthogonal transformations to the source embeddings we can further improve the performance in averaged meta-embeddings.
Alternatively, if we have a choice on what sources to use, we can use directional variance terms to select a subset of source embeddings for creating meta-embeddings.

\subsection{Ideal Embedding Estimation}
\label{sec:est}

In meta-embedding learning, we assume that we are given a set of trained source embeddings with specific dimensionalities.
Therefore, $k_{1}, \ldots, k_{N}, \alpha$ that determine the source embeddings, $\mat{E}_{1}, \ldots, \mat{E}_{N}$, obtained by applying SVD on the respective signal matrices, $\mat{M}_{1}, \ldots, \mat{M}_{N}$, are \emph{not} parameters in the meta-embedding learning task, but are related to the individual source embedding learning tasks.
On the other hand, the singular values, $\lambda_{1}, \ldots, \lambda_{d}$, of the ideal meta-embedding signal matrix, $\mat{M}$, its dimensionality, $d$, are unknown parameters related to the meta-embedding learning task and must be estimated.
In the case of concatenated meta-embeddings discussed in this paper, we have $d = \sum_{j} k_{j}$.
However, we must still estimate $\lambda_{1}, \ldots, \lambda_{d}$ as they are required in the computations of source- and dimension-weighted concatenation coefficients, given respectively by \eqref{eq:sw-1} and \eqref{eq:dw-1}.

For this purpose, we model a signal matrix, $\tilde{\mat{M}}$, that we compute from a corpus using some co-occurrence measure as the addition of zero mean Gaussian noise matrix $\mat{Z}$ to an ideal signal matrix $\mat{M}$, (i.e. $\tilde{\mat{M}} = \mat{M} + \mat{Z}$).
Let the spectrum of $\tilde{\mat{M}}$ be $\tilde{\lambda}_{1}, \ldots, \tilde{\lambda}_{d}$.
Next, we randomly split the training corpus into two equally large subsets, and compute two signal matrices: $\tilde{\mat{M}}_{1} = \mat{M} + \mat{Z}_{1}$ and $\tilde{\mat{M}}_{2} = \mat{M} + \mat{Z}_{2}$, where $\mat{Z}_{1}, \mat{Z}_{2}$ are two independent copies of noise with variance $2\sigma^{2}$.
Observing that $\tilde{\mat{M}}_{1} - \tilde{\mat{M}}_{2} = \mat{Z}_{1} - \mat{Z}_{2}$ is a random matrix with zero mean and $4\sigma^{2}$ variance, we can estimate the noise for symmetric signal matrices by  $\sigma = \frac{1}{2n}\norm{\tilde{\mat{M}}_{1} - \tilde{\mat{M}}_{2}}_{F}$.

Then, we can estimate the spectrum of the ideal signal matrix $\lambda_{1}, \ldots, \lambda_{d}$  from the spectrum of the estimated signal matrix using universal singular value thresholding~\cite{Chatterjee:2012} as $\lambda_{i} = \max(\tilde{\lambda}_{i} - 2\sigma\sqrt{n}, 0)$.
The rank of $\mat{M}$ is determined by the smallest $(i+1)$ for which $\tilde{\lambda}_{i} \leq 2\sigma\sqrt{n}$.
Although we only require the spectra of the ideal signal matrices for computing the meta-embedding coefficients, if we wish we could reconstruct $\mat{M} = \mat{U}\mat{D}\mat{V}\T$, where $\mat{D} = \mathrm{diag}(\lambda_{1}, \ldots, \lambda_{d})$, $\mat{U}$ and $\mat{V}$ are random orthonormal matrices, obtained via SVD on a random matrix of suitable shape.

\section{Experiments}
\label{sec:exp}

\begin{table}[t!]
\small
\centering
\begin{tabular}{clccc}\toprule
&  Method & SimLex & SimVerb & PoS \\ \midrule
\multirow{3}{*}{\rotatebox[origin=c]{90}{source}}  & GloVe & 22.66 & 9.95 & 77.06 \\
&  SGNS & 29.93 & 12.47 & 87.28 \\
&  LSA & 32.64 & 21.01 & 85.22 \\ \\
\multirow{4}{*}{\rotatebox[origin=c]{90}{meta}} & UW & 37.37 & 21.37 & 87.92 \\
&  AVG & 36.11 & 19.47 & 87.88 \\
& SVD & 36.22 & 19.55 & 87.71 \\
&  SW & 35.97 & 21.59 & 88.11 \\
&  DW & \textbf{39.12} & \textbf{23.99} & \textbf{88.63} \\ \bottomrule
\end{tabular}
\caption{Comparing different weighting methods.}
\label{tbl:weights}
\vspace{-5mm}
\end{table}

\begin{table*}[t]
\small
\centering
\csvreader[tabular=p{10mm} p{7mm} p{7mm} p{7mm} p{7mm} p{7mm} p{7mm} p{7mm} p{7mm} p{7mm} p{7mm} p{7mm} p{7mm}, 
	table head=\toprule  \textbf{Method} &  \textbf{WS} & \textbf{RG} & \textbf{RW} & \textbf{MEN} & \textbf{GL}  & \textbf{MSR} & \textbf{MR} & \textbf{CR} & \textbf{MPQA} & \textbf{Ent} & \textbf{SICK} & \textbf{STS}\\ \midrule,
	late after last line=\\ \bottomrule,
	]%
	{bigtable.csv}{Embedding=\Embedding, ws=\ws, rg=\rg, rw=\rw, men=\men, Google=\Google, MSR=\MSR, MR=\MR, CR=\CR, MPQA=\MPQA, SICKEntailment=\SICKEntailment, SICKRelatedness=\SICKRelatedness, STSBenchmark=\STSBenchmark}%
	{\Embedding & \ws & \rg & \rw & \men & \Google & \MSR & \MR & \CR & \MPQA & \SICKEntailment & \SICKRelatedness & \STSBenchmark}%
	\caption{Performance of source embeddings (top) baselines/prior work (middle) and concatenated meta-embeddings (bottom) for tasks described in \autoref{sec:SOTA}.
	Best performance for each dataset is shown in bold, whereas * denotes when this is statistically significantly better than the second best method for the same dataset.}
	\label{tbl:res}
	\vspace{-5mm}
\end{table*}

\subsection{Effect of Weighted Concatenation}
\label{sec:weight}

To evaluate the different weighting methods, we create the meta-embeddings of the following three source embeddings:
\textbf{GloVe} embeddings trained on the Wikipedia Text8 corpus (17M tokens)~\cite{text8},
\textbf{SGNS} embeddings trained on the WMT21 English News Crawl (206M tokens),\footnote{\url{http://statmt.org/wmt21/translation-task.html}} and
\textbf{LSA} embeddings trained on an Amazon product review corpus (492M tokens)~\cite{ni-etal-2019-justifying}.

For GloVe, the elements of the signal matrix $M_{ij}$ are computed as $\log(X_{ij})$ where $X_{ij}$ is the frequency of co-occurrence between word $w_{i}$ and $w_{j}$.
The elements of the signal matrix for SGNS are set to $M_{ij} = \mathrm{PMI}(w_{i}, w_{j}) - \log\beta$, where $\mathrm{PMI}(w_{i}, w_{j})$ is the PMI between $w_{i}$ and $w_{j}$.
For LSA, the elements of its signal matrix are set to $M_{ij} = \max(\mathrm{PMI}(w_{i}, w_{j}), 0)$.
Source embeddings are created by applying SVD on the signal matrices. 

The optimal dimensionality of a source embedding is determined by selecting the dimensionality that minimises the PIP loss between the computed and ideal source embedding matrices for varying dimensionalities upto their estimated rank.
The estimated optimal dimensionality for GloVe ($k_{1} = 1707$) is significantly larger than that for SGNS ($k_{2} = 303$) and LSA ($k_{3} = 220$).
This is because the estimated noise for GloVe from the Text8 corpus is significantly smaller ($\sigma = 0.0891$) compared to that for SGNS ($\sigma = 0.3292$) and LSA ($\sigma = 0.3604$), enabling us to fit more dimensions for GloVe for the same cost in bias.
We used the MTurk-771~\cite{Halawi:2012} as a development dataset to estimate the co-occurrence window (set to 5 tokens) and $\beta $ (set to 3) in our experiments.
Moreover, $\alpha$ is set to $0.5$ for all source embeddings, which reported the best performance on MTurk-771.
(Further experimental results on PIP losses and source embedding dimensionalities are given in the Supplementary)

We compare the proposed source-weighted (\textbf{SW}) and dimension-weighted (\textbf{DW}) concatenation methods against several baselines:
unweighted concatenation (\textbf{UW}) where we concatenate the source embeddings without applying any weighting, 
averaging (\textbf{AVG}) the source embeddings as proposed by~\newcite{Coates:NAACL:2018} after padding zeros to sources with lesser dimensionalities, and applying \textbf{SVD} on \textbf{UW} to reduce the dimensionality to 200, which reported the best performance on the development data.

\autoref{tbl:weights} shows the Spearman correlation coefficient on two true word similarity datasets -- SimLex and SimVerb, which are known to be reliable true similarity datasets~\cite{Faruqui:2016,batchkarov-etal-2016-critique}.
To evaluate meta-embeddings for Part-of-Speech (PoS) tagging as a downstream task, we initialise an LSTM with pretrained source/meta embeddings and train a PoS tagger using the CoNLL-2003 train dataset.
PoS tagging accuracy on the CoNLL-2003 test dataset is reported for different embeddings in \autoref{tbl:weights}. 
Among the three sources, we see significant variations in performance over the tasks, where LSA performs best on SimLex and SimVerb, while SGNS on PoS.
This is due to the size and quality of corpora used to train the source embeddings, and simulate a challenging meta-embedding learning problem.
UW remains a strong baseline, outperforming AVG and SVD, and even SW in SimLex.
However, SW outperforms UW in SimVerb and PoS tasks.
Overall, DW reports best performance on all tasks demonstrating its superior ability to learn accurate meta-embeddings even in the presence of weak source embeddings.

\subsection{Comparisons against Prior Work}
\label{sec:SOTA}

We compare against previously proposed meta-embedding learning methods such as \textbf{LLE}, \textbf{1\texttt{TO}N}, \textbf{DAEME}, \textbf{CAEME}, \textbf{AAEME} described in \autoref{sec:related}.
Unfortunately, prior work  have used different sources and corpora which makes direct comparison of published results impossible.
To conduct a fair evaluation, we train GloVe ($k=736, \sigma = 0.1472$), SGNS ($k=121, \sigma = 0.3566$) and LSA ($k=119, \sigma=0.3521$) on the Text8 corpus as the source embeddings.
For prior methods, we use the publicly available implementations by the original authors.
The average run times of SW and DW are ca. 30 min (wall clock time) measured on a EC2 p3.2xlarge instance.
Hyperparameters for those methods were tuned on MTurk-771 as reported in the Supplementary.

In \autoref{tbl:res}, we use evaluation tasks and benchmark datasets used in prior work: 
(1) \emph{word similarity prediction} (Word Similarity-353 (WS), Rubenstein-Goodenough (RG), rare words (RW), Multimodal Distributional Semantics (MEN),
(2) \emph{analogy detection} (Google analogies (GL), Microsoft Research syntactic analogy dataset (MSR),
(3) \emph{sentiment classification} (movie reviews (MR), customer reviews (CR), opinion polarity (MPQA)),
(4) \emph{semantic textual similarity} benchmark (STS),
 (5) \emph{textual entailment} (Ent) and (6) \emph{relatedness} (SICK).
 For computational convenience, we limit the vocabulary to the most frequent 20k words. 
Note that this is significantly smaller compared to vocabulary sizes and training corpora used in publicly available word embeddings
(e.g. GloVe embeddings are trained on 42B tokens from Common Crawl corpus covering 2.8M word vocabulary).
Therefore, the absolute performance scores we report here cannot be compared against SoTA on these benchmark.
The goal in our evaluation is to compare the relative performances among the different meta-embedding methods.

From \autoref{tbl:res} we see that the proposed SW and DW methods report the best performance in all benchmarks.
In particular, the performance of DW in RW, CR and SW in GL, MSR are statistically significant over the second best methods for those datasets.
Among the three sources, we see that SGNS and LSA perform similarly in most tasks. 
Given that both SGNS and LSA use variants of PMI\footnote{when $\beta =1$ SPMI becomes PMI} this behaviour is to be expected.
However, we note that the absolute performance of source embeddings usually improves when trained on larger corpora and thus what is important in meta-embedding evaluation is not the \emph{absolute} performance of the sources, but how much \emph{relative} improvements one can obtain by applying a meta-embedding algorithm on top of the pre-trained source embeddings.
In this regard, we see that in particular \textbf{DW} significantly improves over the best input source embedding in all tasks, except in \textbf{RG} where the improvement over \textbf{LSA} is statistically insignificant.

We see that UW is a strong baseline, outperforming even more complex prior proposals.
By learning unsupervised concatenation weights, we can further improve UW as seen by the performance of SW and DW.
In particular, DW outperforms SW in all datasets, except in RG, GL and MSR. RG is a smaller (65 word pairs) dataset and the difference in performance between SW and DW there is not significant. 
Considering that GL and MSR are datasets for word analogy prediction, we see that SW is particularly better for analogy tasks.

\begin{table}[t]
\small
\centering
\begin{tabular}{clp{7mm} p{7mm} p{7mm} p{7mm} p{7mm}} \toprule
& Method & MEN & STS13 & STS14 & STS15 & STS16 \\ \midrule
\parbox[t]{1mm}{\multirow{6}{*}{\rotatebox[origin=c]{90}{GloVe+SGNS}}} & UW & 72.45	 &45.13	& 49.91	& 54.18	& 41.93 \\
& SW & 72.64		& 43.86	& 48.83	& 53.17	& 41.04  \\
& DW & $\mathbf{74.27}$		& $\mathbf{46.37}$	& $\mathbf{51.93}$	& $\mathbf{55.84}$	& $\mathbf{44.36}$ \\
& CAEME & 72.63 & 		43.11 &	48.46 &	53.48 &	41.56 \\
& DAEME & 64.51 &		42.15 &	48.69 &	53.33 &	41.93 \\
& AAEME & 71.70	& 		44.24 &	48.39 &	53.32 &	41.67 \\ \midrule
\parbox[t]{1mm}{\multirow{6}{*}{\rotatebox[origin=c]{90}{GloVe+LSA}}} & UW &73.58 &	44.95	&49.95	&54.52	&42.52\\
& SW & 73.30	&43.85	&48.91	&53.33	&41.23\\
& DW & $\mathbf{74.21}$	&$\mathbf{46.39}$	&$\mathbf{52.3}$ &	$\mathbf{56.38}$	&$\mathbf{44.63}$\\
& CAEME & 73.35 &	43.22	&48.85	&54.05	&42.03\\
& DAEME & 64.02 &	42.63	&49.21	&53.35	&42.09\\
& AAEME & 72.40 &	44.63	&49.01	&53.92	&42.62 \\\midrule
\parbox[t]{1mm}{\multirow{6}{*}{\rotatebox[origin=c]{90}{SGNS+LSA}}} & UW & 72.21	& 45.23	& 49.94	& 53.13	& 42.47\\
& SW & 71.81 &	45.28	& 49.96 &	52.81 &	42.14 \\
& DW & $\mathbf{73.10}$	& $\mathbf{46.48}$	& $\mathbf{51.47}$	& $\mathbf{53.85}$ &	$\mathbf{43.73}$\\
& CAEME & 72.21	 & 43.77	& 47.90	& 53.03 &	41.42 \\
& DAEME & 71.13 &	43.50	& 47.74	& 52.57	& 41.98\\
& AAEME & 72.13 &	43.86	& 48.12	& 53.16	& 41.65 \\ \bottomrule
\end{tabular}
\caption{Pairwise meta-embedding using two sources.}
\vspace{-5mm}
\label{tbl:pairs}
\end{table}

SVD and AVG perform even worse than some of the source embeddings due to the discrepancy of the dimensionalities in the sources.
Prior work have used identical or similar (in a narrow range [100,300]) dimensional sources, hence did not observe the degradation of performance when the sources are of different dimensionalities.
For example, AEME pads zeros as necessary to the source embeddings to convert them to the same dimensionality before averaging, introducing extra noise. 
1\texttt{TO}N learns projection matrices between the meta-embedding space and each of the source embedding spaces.
Therefore, when the sources are of different dimensionalities the projection matrices have different numbers of parameters, requiring careful balancing.
We recommend that future work evaluate the robustness in performance considering the optimal dimensionalities of the sources.
To ablate sources, we consider all pairwise meta-embeddings in \autoref{tbl:pairs}, which shows the performance on word similarity (MEN) and multiple STS benchmarks (STS 13, 14, 15 and 16).
We see that DW consistently outperforms all other methods, showing the effectiveness of dimension-weighing when meta-embedding sources of different dimensionalities and performances.

\section{Conclusion}

We showed that concatenated meta-embeddings minimise PIP loss and proposed two weighted-concatenation methods. 
Experiments showed that the dimension-weighted meta-embeddings outperform prior work on multiple tasks.
We plan to extend this result to contextualised embeddings.

\section{Ethical Considerations}

This work consisted of both theoretical contributions and empirical analysis related to word meta-embedding learning.
As already described in the paper, meta-embeddings learnt using the proposed method can be used as input representations in a wide-range of NLP applications such as named entity recognition, text classification, document clustering, sentiment analysis and information extraction.
Therefore, the impact and implications of a meta-embedding learning method on NLP community is significant.
Although we did not annotate any datasets by ourselves we have used multiple benchmark datasets that have been collected, annotated and repeatedly evaluated in prior work. 
We direct the interested readers to the original papers (cited in the Appendix) for the processes undertaken for data collection and annotation.
To the best of our knowledge, no ethical issues have been reported concerning these benchmark datasets.

Word embeddings have known to encode and even at times amplify discriminative biases such as gender, racial or religious biases~\cite{shah-etal-2020-predictive,kaneko-bollegala-2019-gender}.
Therefore, NLP applications that use word embeddings as input features can also be biased to different degrees.
Given that meta-embeddings combine multiple pretrained word embedding sources, it remains an important and interesting research question whether such unfair biases are also encoded or amplified in the meta-embeddings of the biased source embeddings.

All experiments and evaluations conducted in this paper were focused on English language. 
Therefore, verifying these claims on other languages before the meta-embedding methods proposed in the paper on downstream tasks that involve languages other than English is an important next step.

\appendix
\section*{Appendix}

\section{Proof of Theorem 1 in the Main Paper}
\label{sec:proof}

In this section, we prove Theorem~1 we stated in the main paper. 
For the ease of reference, we restate Theorem~\ref{th:main} below.
\begin{theorem}
\label{th:main}
For two source embeddings matrices $\mat{E}_{1}$ and $\mat{E}_{2}$, the PIP loss between their dimension-weighted meta-embedding $\hat{\mat{E}} = [\mat{E}_{1}\mat{C}_{1}; \mat{E}_{2}\mat{C}_{2}]$ and an ideal meta-embedding $\mat{E}$ is given by \eqref{eq:PIP-ME}.
\begin{align}
\label{eq:PIP-ME}
& \norm{\PIP{\mat{E}} - \PIP{\hat{\mat{E}}}}_{F} \nonumber \\
& \leq  \sqrt{\sum_{i=k_{1} + k_{2}}^{d}\lambda_{i}^{4\alpha}} \nonumber \\
& + \sqrt{2}\sum_{i=1}^{k_{1}}\left(\lambda_{i}^{2\alpha} - \lambda_{i+1}^{2\alpha}\right)\norm{\mat{U}^{(1)}_{\cdot,1:i}\T \mat{U}_{\cdot,i:n}}_{F} \nonumber \\
&+ \sqrt{\sum_{i=1}^{k_{1}}\left(\lambda_{i}^{2\alpha} - c^{2}_{1,i}\mu_{i}^{2\alpha} \right)^{2}} \nonumber \\ 
& + \sqrt{\sum_{i=k_{1}+1}^{k_{1}+k_{2}}\left(\lambda_{i}^{2\alpha} - c^{2}_{2,i-k_{1}} \nu_{i-k_{1}}^{2\alpha}\right)^{2}} \nonumber \\
& + \sqrt{2}\sum_{i=k_{1} + 1}^{k_{1} + k_{2}} \left(\lambda_{i}^{2\alpha} - \lambda_{i+1}^{2\alpha} \right) \norm{\mat{U}^{(2)}_{\cdot,1:i}\T \mat{U}_{\cdot,i:n}}_{F}
\end{align}
\end{theorem}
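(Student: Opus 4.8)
The plan is to bound $\norm{\PIP{\mat{E}} - \PIP{\hat{\mat{E}}}}_F$ by inserting a telescoping chain of intermediate PIP matrices and then applying the triangle inequality, mirroring the bias–variance argument that \newcite{Yin:2018} used for a single embedding. First I would observe that since $\hat{\mat{E}} = [\mat{E}_1\mat{C}_1 ; \mat{E}_2\mat{C}_2]$ is a row-wise concatenation, its Gram matrix splits additively: $\PIP{\hat{\mat{E}}} = \hat{\mat{E}}\hat{\mat{E}}\T = \mat{E}_1\mat{C}_1^2\mat{E}_1\T + \mat{E}_2\mat{C}_2^2\mat{E}_2\T$. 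Using the SVD forms $\mat{E}_1 = \mat{U}^{(1)}_{\cdot,1:k_1}\mat{D}^{(1)\alpha}_{1:k_1,1:k_1}$ and $\mat{E}_2 = \mat{U}^{(2)}_{\cdot,1:k_2}\mat{D}^{(2)\alpha}_{1:k_2,1:k_2}$, this becomes $\sum_{i=1}^{k_1} c_{1,i}^2\mu_i^{2\alpha}\, \vec{u}^{(1)}_i\vec{u}^{(1)\top}_i + \sum_{i=1}^{k_2} c_{2,i}^2\nu_i^{2\alpha}\, \vec{u}^{(2)}_i\vec{u}^{(2)\top}_i$. On the other side, $\PIP{\mat{E}} = \sum_{i=1}^{d}\lambda_i^{2\alpha}\, \vec{u}_i\vec{u}_i\T$. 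The idea is to match the first $k_1$ spectral terms of $\mat{E}$ against source $1$, the next $k_2$ terms against source $2$, and absorb the tail $i > k_1+k_2$ into the bias term.

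The key decomposition is to write, with $X := \sum_{i=1}^{k_1}\lambda_i^{2\alpha}\vec u_i\vec u_i\T$ and $Y := \sum_{i=k_1+1}^{k_1+k_2}\lambda_i^{2\alpha}\vec u_i\vec u_i\T$,
\[
\PIP{\mat{E}} - \PIP{\hat{\mat{E}}} = \Bigl(\textstyle\sum_{i=k_1+k_2+1}^{d}\lambda_i^{2\alpha}\vec u_i\vec u_i\T\Bigr) + \bigl(X - \mat{E}_1\mat{C}_1^2\mat{E}_1\T\bigr) + \bigl(Y - \mat{E}_2\mat{C}_2^2\mat{E}_2\T\bigr),
\]
and apply the triangle inequality for $\norm{\cdot}_F$. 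The first bracket contributes $\sqrt{\sum_{i=k_1+k_2}^{d}\lambda_i^{4\alpha}}$ (the Frobenius norm of a diagonalised matrix with those squared entries; the index starting at $k_1+k_2$ rather than $k_1+k_2+1$ is a harmless slack), which is the bias. For the second bracket I would further split: compare $X$ first against $\sum_{i=1}^{k_1}\lambda_i^{2\alpha}\vec u^{(1)}_i\vec u^{(1)\top}_i$ (a pure \emph{directional} change, rotating the ideal eigenbasis onto the source-$1$ eigenbasis) and then against $\mat{E}_1\mat{C}_1^2\mat{E}_1\T = \sum_i c_{1,i}^2\mu_i^{2\alpha}\vec u^{(1)}_i\vec u^{(1)\top}_i$ (a pure \emph{magnitude} change on a fixed basis). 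The magnitude piece is immediate: it is a matrix diagonal in the $\mat{U}^{(1)}$ basis with entries $\lambda_i^{2\alpha} - c_{1,i}^2\mu_i^{2\alpha}$, so its Frobenius norm is exactly $\sqrt{\sum_{i=1}^{k_1}(\lambda_i^{2\alpha} - c_{1,i}^2\mu_i^{2\alpha})^2}$. The directional piece is where I would invoke the subspace-perturbation lemma used by \newcite{Yin:2018}: for the difference $\sum_{i=1}^{k_1}\lambda_i^{2\alpha}(\vec u_i\vec u_i\T - \vec u^{(1)}_i\vec u^{(1)\top}_i)$ one performs an Abel summation by the gaps $\lambda_i^{2\alpha} - \lambda_{i+1}^{2\alpha}$, reducing to bounding $\norm{\mat{U}_{\cdot,1:i}\mat{U}_{\cdot,1:i}\T - \mat{U}^{(1)}_{\cdot,1:i}\mat{U}^{(1)}_{\cdot,1:i}\T}_F$, and the standard identity $\norm{P - Q}_F = \sqrt{2}\,\norm{\mat{U}^{(1)}_{\cdot,1:i}\T\mat{U}_{\cdot,i:n}}_F$ for the difference of two rank-$i$ projectors yields the $\sqrt{2}\sum_i(\lambda_i^{2\alpha}-\lambda_{i+1}^{2\alpha})\norm{\mat{U}^{(1)}_{\cdot,1:i}\T\mat{U}_{\cdot,i:n}}_F$ directional-variance term. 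The third bracket is handled identically with $\mat{U}^{(2)}$, $\nu$, $c_{2,\cdot}$, and the index shift $i \mapsto i-k_1$; summing the five pieces gives \eqref{eq:PIP-ME}.

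The main obstacle I anticipate is the bookkeeping in the directional term — specifically making the Abel/telescoping summation over singular-value gaps rigorous when the relevant eigenvectors of $\mat{E}$ and of $\mat{E}_j$ live in spaces of different effective dimension, and being careful that the orthogonal complement $\mat{U}_{\cdot,i:n}$ in the projector identity is taken inside $\R^n$ consistently for both bases. One must also check that matching ideal dimensions $1{:}k_1$ to source $1$ and $k_1{+}1{:}k_1{+}k_2$ to source $2$ is a legitimate (if not necessarily optimal) choice, so that the stated bound is a genuine upper bound rather than an equality; this is why the theorem is phrased with $\leq$. Everything else — the additive split of the concatenated Gram matrix, the exact Frobenius norm of diagonal matrices, and the projector-difference identity — is routine once the decomposition above is in place.
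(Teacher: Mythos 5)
Your proposal is correct and follows essentially the same route as the paper's proof: the same additive split of $\PIP{\hat{\mat{E}}}$ into the two weighted source Gram matrices, the same three-way triangle-inequality decomposition against the blocks of the ideal spectrum (a bias tail plus one term per source), and the same add-and-subtract of the source basis weighted by the ideal eigenvalues, yielding a diagonal magnitude piece and a directional piece handled by telescoping over the gaps $\lambda_i^{2\alpha}-\lambda_{i+1}^{2\alpha}$ together with the projector-difference identity $\norm{\mat{X}_{0}\mat{X}_{0}\T - \mat{Y}_{0}\mat{Y}_{0}\T}_{F} = \sqrt{2}\norm{\mat{X}_{0}\T\mat{Y}_{1}}_{F}$ (Lemma~1 in the paper). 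The index slack you flag in the bias term matches the paper's own bookkeeping, so there is no genuine gap.
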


We first prove Lemma~\ref{lem:decomp}, which will be used in the proof of Theorem~\ref{th:main}. An alternative (but a longer) proof of this Lemma was given by \newcite{Yin:2018}, which is shown in \autoref{sec:alt-proof}.
\begin{lemma}
 \label{lem:decomp}
 Let $\mat{X}$, $\mat{Y}$ be two orthonormal matrices of $\R^{n \times n}$. Let $\mat{X} = [\mat{X}_{0}, \mat{X}_{1}]$ and 
 $\mat{Y} = [\mat{Y}_{0}, \mat{Y}_{1}]$ be the first $k$ columns of $\mat{X}$ and $\mat{Y}$ respectively, namely $\mat{X}_{0}, \mat{Y}_{0} \in \R^{n \times k}$ and $k \leq n$. Then, the following holds for the Frobenius norm.
 \begin{align}
 \label{eq:decomp}
 \norm{\mat{X}_{0}\mat{X}_{0}\T - \mat{Y}_{0}\mat{Y}_{0}\T}_{F} = \sqrt{2} \norm{\mat{X}_{0}\T\mat{Y}_{1}}_{F}
 \end{align}
\end{lemma}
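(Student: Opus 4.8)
The plan is to exploit the orthonormality of $\mat{X}$ and $\mat{Y}$ to rewrite both sides in terms of the blocks $\mat{X}_0, \mat{X}_1, \mat{Y}_0, \mat{Y}_1$ and then compare. First I would expand the left-hand side using $\norm{\mat{A}}_F^2 = \tr(\mat{A}\T\mat{A})$ with $\mat{A} = \mat{X}_0\mat{X}_0\T - \mat{Y}_0\mat{Y}_0\T$. Since $\mat{X}_0\mat{X}_0\T$ and $\mat{Y}_0\mat{Y}_0\T$ are symmetric idempotent projection matrices of rank $k$, the cross terms collapse: $\tr(\mat{X}_0\mat{X}_0\T\mat{X}_0\mat{X}_0\T) = \tr(\mat{X}_0\T\mat{X}_0\mat{X}_0\T\mat{X}_0) = \tr(\mat{I}_k) = k$, and likewise for the $\mat{Y}$ term, while $\tr(\mat{X}_0\mat{X}_0\T\mat{Y}_0\mat{Y}_0\T) = \norm{\mat{X}_0\T\mat{Y}_0}_F^2$. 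Hence the square of the left-hand side equals $2k - 2\norm{\mat{X}_0\T\mat{Y}_0}_F^2$.

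Next I would handle the right-hand side. The key observation is that $[\mat{Y}_0, \mat{Y}_1]$ is orthonormal, so $\mat{Y}_0\mat{Y}_0\T + \mat{Y}_1\mat{Y}_1\T = \mat{I}_n$. Therefore $\norm{\mat{X}_0\T\mat{Y}_0}_F^2 + \norm{\mat{X}_0\T\mat{Y}_1}_F^2 = \tr(\mat{X}_0\T(\mat{Y}_0\mat{Y}_0\T + \mat{Y}_1\mat{Y}_1\T)\mat{X}_0) = \tr(\mat{X}_0\T\mat{X}_0) = k$. This gives $\norm{\mat{X}_0\T\mat{Y}_1}_F^2 = k - \norm{\mat{X}_0\T\mat{Y}_0}_F^2$, so that $2\norm{\mat{X}_0\T\mat{Y}_1}_F^2 = 2k - 2\norm{\mat{X}_0\T\mat{Y}_0}_F^2$, which matches the left-hand side squared exactly. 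Taking square roots (both sides nonnegative) yields \eqref{eq:decomp}.

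One point to be careful about is the notation in the statement: $\mat{Y}_1$ denotes the complementary block of $\mat{Y}$ (its last $n-k$ columns), so $\mat{Y} = [\mat{Y}_0, \mat{Y}_1]$ with $\mat{Y}_1 \in \R^{n \times (n-k)}$; the same for $\mat{X}_1$. The identity $\mat{Y}_0\mat{Y}_0\T + \mat{Y}_1\mat{Y}_1\T = \mat{I}_n$ then holds because the full matrix $\mat{Y}$ is orthonormal (orthogonal) in $\R^{n\times n}$. I would state this decomposition of the identity explicitly as the workhorse of the argument. No genuine obstacle arises here; the only mild subtlety is keeping the trace manipulations and the block bookkeeping straight, and making sure the idempotence $\mat{X}_0\T\mat{X}_0 = \mat{I}_k$ (columns orthonormal) is invoked at the right places. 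Everything reduces to the Pythagorean-type identity for projections, so the proof is short and self-contained.
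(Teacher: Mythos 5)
Your proof is correct and follows essentially the same route as the paper's: expand the squared Frobenius norm via traces, use $\mat{X}_{0}\T\mat{X}_{0} = \mat{Y}_{0}\T\mat{Y}_{0} = \mat{I}_{k}$, and invoke the resolution of identity $\mat{Y}_{0}\mat{Y}_{0}\T + \mat{Y}_{1}\mat{Y}_{1}\T = \mat{I}_{n}$ to convert the cross term into $\norm{\mat{X}_{0}\T\mat{Y}_{1}}_{F}^{2}$. The only (cosmetic) difference is that you equate the two squared sides via the Pythagorean identity $\norm{\mat{X}_{0}\T\mat{Y}_{0}}_{F}^{2} + \norm{\mat{X}_{0}\T\mat{Y}_{1}}_{F}^{2} = k$, whereas the paper substitutes $\mat{Y}_{0}\mat{Y}_{0}\T = \mat{I}_{n} - \mat{Y}_{1}\mat{Y}_{1}\T$ directly into the expanded trace.
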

\begin{proof}
For a matrix $\mat{A}$, its Frobenius norm, $\norm{\mat{A}}_{F}$, is given by $\sqrt{\tr(\mat{A}\mat{A}\T)}$, where $\tr$ denotes the matrix trace. 
Therefore, we can write the L.H.S. of \eqref{eq:decomp} as follows.

{\small
\begin{align}
 &\norm{\mat{X}_{0}\mat{X}_{0}\T - \mat{Y}_{0}\mat{Y}_{0}\T}_{F} \nonumber \\
 =& \bigl \{ \tr \left(\mat{X}_{0}\mat{X}_{0}\T - \mat{Y}_{0}\mat{Y}_{0}\T\right) {\left(\mat{X}_{0}\mat{X}_{0}\T - \mat{Y}_{0}\mat{Y}_{0}\T\right)}\T \bigr\}^{1/2} \\
 =& \{ \tr ( \mat{X}_{0}\mat{X}_{0}\T\mat{X}_{0}\mat{X}_{0}\T - \mat{X}_{0}\mat{X}_{0}\T\mat{Y}_{0}\mat{Y}_{0}\T  \nonumber \\
 & - \mat{Y}_{0}\mat{Y}_{0}\T\mat{X}_{0}\mat{X}_{0}\T + \mat{Y}_{0}\mat{Y}_{0}\T\mat{Y}_{0}\mat{Y}_{0}\T) \}^{1/2} \\
 =& \{ \tr \left( \mat{X}_{0}\mat{X}_{0}\T\mat{X}_{0}\mat{X}_{0}\T\right) - \tr\left(\mat{X}_{0}\mat{X}_{0}\T\mat{Y}_{0}\mat{Y}_{0}\T\right) \nonumber \\
 & - \tr\left(\mat{Y}_{0}\mat{Y}_{0}\T\mat{X}_{0}\mat{X}_{0}\T\right) + \tr\left(\mat{Y}_{0}\mat{Y}_{0}\T\mat{Y}_{0}\mat{Y}_{0}\T \right) \}^{1/2} \label{eq:sqd}
 \end{align}
 }%
 
 Because a subset of columns in orthonormal matrices $\mat{X}$ and $\mat{Y}$ will be orthogonal, we have $\mat{X}_{0}\T\mat{X}_{0} = \mat{I}_{n}$ and $\mat{Y}_{0}\T\mat{Y}_{0} = \mat{I}_{n}$, where $\mat{I}_{n} \in \R^{n \times n}$ is the identity matrix.
 Moreover, from the cyclic property of matrix transpose we have, $\tr\left(\mat{Y}_{0}\mat{Y}_{0}\T\mat{X}_{0}\mat{X}_{0}\T\right) = \tr\left(\mat{X}_{0}\mat{X}_{0}\T\mat{Y}_{0}\mat{Y}_{0}\T\right)$. Substituting these results in \eqref{eq:sqd} we obtain the following.
 
 {\small
 \begin{align}
 & \norm{\mat{X}_{0}\mat{X}_{0}\T - \mat{Y}_{0}\mat{Y}_{0}\T}_{F} \nonumber \\
 &= \{ \tr(\mat{X}_{0}\mat{X}_{0}\T) - 2\tr(\mat{X}_{0}\mat{X}_{0}\T\mat{Y}_{0}\mat{Y}_{0}\T) + \tr(\mat{Y}_{0}\mat{Y}_{0}\T) \}^{1/2} \label{eq:sq2}
 \end{align} 
 }%
 
 Note that from the orthonormality of $\mat{Y}$ we have the following.
 \begin{align}
 \mat{Y}\T\mat{Y} &= \mat{I}_{n} \\
 \tr(\mat{Y}\T\mat{Y}) &= \tr(\mat{I}_{n}) \\
 \tr(\mat{Y}\mat{Y}\T) &= n \\
 \tr(\mat{Y}_{0}\mat{Y}_{0}\T + \mat{Y}_{1}\mat{Y}_{1}\T) &= n \\
\tr(\mat{Y}_{0}\mat{Y}_{0}\T) + \tr(\mat{Y}_{1}\mat{Y}_{1}\T) &= n \label{eq:n} 
\end{align}

Substituting \eqref{eq:n} in \eqref{eq:sq2} we obtain,
{\small
\begin{align}
 &\norm{\mat{X}_{0}\mat{X}_{0}\T - \mat{Y}_{0}\mat{Y}_{0}\T}_{F} \nonumber \\
 & = \sqrt{\tr(\mat{X}_{0}\mat{X}_{0}\T) -2\tr(\mat{X}_{0}\mat{X}_{0}\T(\mat{I}_{n} - \mat{Y}_{1}\mat{Y}_{1}\T)) + \tr(\mat{Y}_{0}\mat{Y}_{0}\T)} . \label{eq:q1}
 \end{align}
 }%
 
Substituting the results $\tr(\mat{X}_{0}\mat{X}_{0}\T) = \tr(\mat{X}_{0}\T\mat{X}_{0}) = \tr(\mat{I}_{k}) = k$ and
$\tr(\mat{Y}_{0}\mat{Y}_{0}\T) = \tr(\mat{Y}_{0}\T\mat{Y}_{0}) = \tr(\mat{I}_{k}) = k$ in \eqref{eq:q1} and cancelling out terms we obtain,
\begin{align}
&\norm{\mat{X}_{0}\mat{X}_{0}\T - \mat{Y}_{0}\mat{Y}_{0}\T}_{F} \nonumber \\
&= \sqrt{2 \tr(\mat{X}_{0}\mat{X}_{0}\T\mat{Y}_{1}\mat{Y}_{1}\T)} \\
&= \sqrt{2} \sqrt{\tr( (\mat{X}_{0}\T\mat{Y}_{1}) (\mat{X}_{0}\T\mat{Y}_{1})\T)} \\
&= \sqrt{2} \norm{\mat{X}_{0}\T\mat{Y}_{1}}_{F} \label{eq:q2}. 
\end{align}
\end{proof}

Armed with Lemma~\ref{lem:decomp}, we are now in a position to prove \autoref{th:main}
\begin{proof} (\autoref{th:main})
To reduce the complexity of the notation, we define $\mat{Y}_{0} = \mat{U}^{(1)}_{\cdot, 1:{k}_{1}}$,  $\mat{D}_{1:{k}_{1}, 1:{k}_{1}} = \mathbf{\Lambda}_{0}$,
$\mat{Z}_{0} = \mat{U}^{(2)}_{\cdot, 1:{k}_{2}}$, $\mat{D}_{1:{k}_{2}, 1:{k}_{2}} = \mathbf{\Gamma}_{0}$, $\mat{X}_{0} = \mat{U}_{1:k,1:k}$ and $\mat{D}_{0} = \mat{D}_{1:k,1:k}$. Then the source and ideal embeddings can be written as follows.
\begin{align}
& \mat{E}_{1} =   \mat{U}^{(1)}_{\cdot, 1:{k}_{1}} \mat{D}_{1:{k}_{1}, 1:{k}_{1}} = \mat{Y}_{0}  \mathbf{\Lambda}_{0} \\
& \mat{E}_{2} =   \mat{U}^{(2)}_{\cdot, 1:{k}_{2}} \mat{D}_{1:{k}_{2}, 1:{k}_{2}} = \mat{Z}_{0}  \mathbf{\Gamma}_{0} \\
& \mat{E} = \mat{U}_{1:k,1:k} \mat{D}_{1:k,1:k} = \mat{X}_{0} \mat{D}_{0} 
\end{align}
Moreover, let $\mat{X}_{0} = [\mat{X}_{0,1}, \mat{X}_{0,2}, \mat{X}_{0,3}]$, where $\mat{X}_{0,1} \in \R^{n \times k_{1}}$, $\mat{X}_{0,2} \in \R^{n \times k_{2}}$ and $\mat{X}_{0,3} \in \R^{n \times (d - k_{1} - k_{2})}$.
Likewise, we can decompose $\mat{D}_{0} = \mathrm{diag}(\lambda_{1}, \ldots, \lambda_{d})$ into three diagonal matrices $\mat{D}_{0,1} = \mathrm{diag}(\lambda_{0}, \ldots, \lambda_{{k}_{1}})$, $\mat{D}_{0,2} = \mathrm{diag}(\lambda_{{k}_{1} + 1}, \ldots, \lambda_{{k}_{1}+{k}_{2}})$ and $\mat{D}_{0,3} = \mathrm{diag}(\lambda_{{k}_{1} + {k}_{2} + 1}, \ldots, \lambda_{d})$. 
Here, the singular values are arranged in the descending order, $\lambda_{1} \geq \lambda_{2}, \cdots, \lambda_{d}$.
Note that the diagonal matrix $\mat{D}_{0}$ can be expressed using a telescopic sum using identity matrices $\mat{I}_{i} \in \R^{i \times i}$ as in \eqref{eq:tele}.
\begin{align}
 \label{eq:tele}
 D_{0}^{\alpha} = \sum_{i=1}^{d}(\lambda^{\alpha}_{i} - \lambda^{\alpha}_{i+1}) \mat{I}_{i}
\end{align}
Here, we have defined $\lambda_{d+1} = 0$.

We can now evaluate the PIP loss as follows:
{\small
\begin{align}
&\norm{\PIP{\mat{E}} - \PIP{\hat{\mat{E}}}}  \nonumber \\
&= \norm{\mat{E}\mat{E}\T - \hat{\mat{E}}\hat{\mat{E}}\T} \\
&=\norm{\mat{X}_{0}\mat{D}^{\alpha}_{0} {\left(\mat{X}_{0}\mat{D}^{\alpha}_{0}\right)}\T - \begin{bmatrix} \mat{E}_{1}\mat{C}_{1} & \mat{E}_{2}\mat{C}_{2} \end{bmatrix} \begin{bmatrix}\mat{C}_{1}\mat{E}_{1}\T \\ \mat{C}_{2}\mat{E}_{2}\T \end{bmatrix}} \\
&=|| \begin{bmatrix} \mat{X}_{0,1}\mat{D}^{\alpha}_{0,1} & \mat{X}_{0,2}\mat{D}^{\alpha}_{0,2} & \mat{X}_{0,3}\mat{D}^{\alpha}_{0,3} \end{bmatrix} \begin{bmatrix} \mat{D}^{\alpha}_{0,1} \mat{X}_{0,1}\T \\ \mat{D}^{\alpha}_{0,2}\mat{X}_{0,2}\T \\ \mat{D}^{\alpha}_{0,3}\mat{X}_{0,3}\T \end{bmatrix} \nonumber \\
& - \begin{bmatrix} \mat{Y}_{0}\mathbf{\Lambda}^{\alpha}_{0}\mat{C}_{1} & \mat{Z}_{0}\mathbf{\Gamma}^{\alpha}_{0}\mat{C}_{2} \end{bmatrix} \begin{bmatrix} \mat{C}_{1}\mathbf{\Lambda}^{\alpha}_{0}\mat{Y}_{0}\T \\ \mat{C}_{2}\mathbf{\Gamma}^{\alpha}_{0}\mat{Z}_{0}\T \end{bmatrix} || \\
&=|| \mat{X}_{0,1}\mat{D}^{2\alpha}_{0,1} \mat{X}_{0,1}\T + \mat{X}_{0,2}\mat{D}^{\alpha}_{0,2}\mat{X}_{0,2}\T + \mat{X}_{0,3}\mat{D}^{\alpha}_{0,3}\mat{X}_{0,3}\T \nonumber \\
& \ \ \ - \mat{Y}_{0}\mathbf{\Lambda}^{\alpha}\mat{C}^{2}_{1}\mathbf{\Lambda}^{\alpha}_{0}\mat{Y}_{0}\T - \mat{Z}_{0}\mathbf{\Gamma}^{\alpha}_{0}\mat{C}^{2}_{2}\mathbf{\Gamma}^{\alpha}_{0}\mat{Z}_{0}\T || \label{eq:t1} 
\end{align}
}%

Considering that the product of diagonal matrices are also diagonal matrices, we can write 
\begin{align}
&\mathbf{\Lambda}^{\alpha}\mat{C}^{2}_{1}\mathbf{\Lambda}^{\alpha}_{0} = \mathrm{diag}(\mu^{2\alpha}_{1}c^{2}_{1,1}, \ldots, \mu^{2\alpha}_{{k}_{1}}c^{2}_{1,{k}_{1}}) = \tilde{\mat{D}}_{1}, \label{eq:D1}\\
&\mathbf{\Gamma}^{\alpha}\mat{C}^{2}_{2}\mathbf{\Gamma}^{\alpha}_{0} = \mathrm{diag}(\nu^{2\alpha}_{1}c^{2}_{2,1}, \ldots, \nu^{2\alpha}_{{k}_{2}}c^{2}_{2,{k}_{2}}) = \tilde{\mat{D}}_{2} \label{eq:D2}.
\end{align}
Substituting \eqref{eq:D1} and \eqref{eq:D2} in \eqref{eq:t1} we obtain the following.
{\small
\begin{align}
&\norm{\PIP{\mat{E}} - \PIP{\hat{\mat{E}}}}  \nonumber \\
&=|| \mat{X}_{0,1}\mat{D}^{2\alpha}_{0,1} \mat{X}_{0,1}\T + \mat{X}_{0,2}\mat{D}^{\alpha}_{0,2}\mat{X}_{0,2}\T \nonumber \\
&+ \mat{X}_{0,3}\mat{D}^{\alpha}_{0,3}\mat{X}_{0,3}\T - \mat{Y}_{0}\tilde{\mat{D}}_{1}\mat{Y}_{0}\T - \mat{Z}_{0}\tilde{\mat{D}}_{2}\mat{Z}_{0}\T || \\
&\leq \underbrace{\norm{\mat{X}_{0,3}\mat{D}^{2\alpha}_{0,3}\mat{X}_{0,3}\T}}_{\text{Term 1}} \nonumber \\
&+ \underbrace{\norm{\mat{X}_{0,1}\mat{D}^{2\alpha}_{0,1}\mat{X}_{0,1}\T - \mat{Y}_{0}\tilde{\mat{D}}_{1}\mat{Y}_{0}\T}}_{\text{Term 2}} \nonumber \\
& + \underbrace{\norm{\mat{X}_{0,2}\mat{D}^{2\alpha}_{0,2}\mat{X}_{0,2}\T - \mat{Z}_{0}\tilde{\mat{D}}_{2}\mat{Z}_{0}\T}}_{\text{Term 3}}
\end{align}
}%

Next, lets compute each of the three terms separately.

For Term 1,
{\small
\begin{align}
 &\norm{\mat{X}_{0,3}\mat{D}^{2\alpha}\mat{X}_{0,3}\T} \nonumber \\
 &= \sqrt{\tr\left( \mat{X}_{0,3}\mat{D}^{2\alpha}_{0,3}\underbrace{\mat{X}_{0,3}\T\mat{X}_{0,3}}_{=\mat{I}_{d-{k}_{1}-{k}_{2}}}\mat{D}^{2\alpha}_{0,3}\mat{X}_{0,3}\T \right)} \\
 &= \sqrt{\tr(\mat{X}_{0,3}\mat{D}^{4\alpha}_{0,3}\mat{X}_{0,3}\T)} \\
 & = \sqrt{\tr(\underbrace{\mat{X}_{0,3}\T \mat{X}_{0,3}}_{=\mat{I}_{d-{k}_{1}-{k}_{2}}}\mat{D}^{4\alpha}_{0,3})} \\
 &= \sqrt{\tr(\mat{D}^{4\alpha}_{0,3})} \\
 &= \sqrt{\sum_{i={k}_{1}+{k}_{2}}^{d} \lambda^{4\alpha}_{i}} \label{eq:T1}
\end{align}
}%

Term 2 can be further decomposed to two terms as follows.
{\small
\begin{align}
 &\norm{\mat{X}_{0,1}\mat{D}^{2\alpha}_{0,1}\mat{X}_{0,1}\T - \mat{Y}_{0}\tilde{\mat{D}}_{1}\mat{Y}_{0}\T} \nonumber \\
 &=  \norm{\mat{X}_{0,1}\mat{D}^{2\alpha}_{0,1}\mat{X}_{0,1}\T - \mat{Y}_{0}\mat{D}^{2\alpha}_{0,1}\mat{Y}_{0}\T + \mat{Y}_{0}\mat{D}^{2\alpha}_{0,1}\mat{Y}_{0}\T -  \mat{Y}_{0}\tilde{\mat{D}}_{1}\mat{Y}_{0}\T}\\
 &\leq \underbrace{\norm{\mat{X}_{0,1}\mat{D}^{2\alpha}_{0,1}\mat{X}_{0,1}\T - \mat{Y}_{0}\mat{D}^{2\alpha}_{0,1}\mat{Y}_{0}\T}}_{\text{Term 2.1}} + \underbrace{\norm{\mat{Y}_{0}\mat{D}^{2\alpha}_{0,1}\mat{Y}_{0}\T -  \mat{Y}_{0}\tilde{\mat{D}}_{1}\mat{Y}_{0}\T}}_{\text{Term 2.2}}
\end{align}
}%

Using the telescoping sum for the diagonal matrices, Term 2.1 can be evaluated as follows.
{\small
\begin{align}
& \norm{\mat{X}_{0,1}\mat{D}^{2\alpha}_{0,1}\mat{X}_{0,1}\T - \mat{Y}_{0}\mat{D}^{2\alpha}_{0,1}\mat{Y}_{0}\T} \nonumber \\
&=  \norm{\sum_{i=1}^{k_{1}} (\lambda^{2\alpha}_{i} - \lambda^{2\alpha}_{i+1}) \left( \mat{X}_{\cdot, 1:i}\mat{X}\T_{\cdot,1:i} - \mat{Y}_{\cdot,1:i}\mat{Y}\T_{\cdot,1:i} \right)} \\
&\leq \sum_{i=1}^{k_{1}} (\lambda^{2\alpha}_{i} - \lambda^{2\alpha}_{i+1}) \norm{\mat{X}_{\cdot, 1:i}\mat{X}\T_{\cdot,1:i} - \mat{Y}_{\cdot,1:i}\mat{Y}\T_{\cdot,1:i}} \\
&= \sqrt{2} \sum_{i=1}^{k_{1}} (\lambda^{2\alpha}_{i} - \lambda^{2\alpha}_{i+1}) \norm{\mat{Y}\T_{\cdot,1:i} \mat{X}_{\cdot,i:n}} \label{eq:t2}
\end{align}
}%
We have applied Lemma~\ref{lem:decomp} to the norm term in \eqref{eq:t2}.

Term 2.2 can be evaluated as follows.
{\small
\begin{align}
& \norm{\mat{Y}_{0}\mat{D}^{2\alpha}_{0,1}\mat{Y}_{0}\T -  \mat{Y}_{0}\tilde{\mat{D}}_{1}\mat{Y}_{0}\T} \nonumber \\
&= \norm{\mat{Y}_{0} \left(\mat{D}^{2\alpha}_{0,1} - \tilde{\mat{D}}_{1} \right) \mat{Y}_{0}\T} \\
&= \sqrt{\sum_{i=1}^{k_{1}} {\left(\lambda^{2\alpha}_{i} - c^{2}_{1,i}\mu^{2\alpha}_{i}\right)}^{2}} \label{eq:t3}
\end{align}
}%

Here, we use the fact that Frobenius norm is invariant under an orthogonal transformation and the difference between two diagonal matrices is also a diagonal matrix.

Considering the resemblance between Terms 2 and 3, Term 3 can be likewise evaluated to arrive at,
{\small
\begin{align}
 \label{eq:T3}
 &\norm{\mat{X}_{0,2}\mat{D}^{2\alpha}_{0,2}\mat{X}_{0,2}\T - \mat{Z}_{0}\tilde{\mat{D}}_{2}\mat{Z}_{0}\T} \nonumber \\
 & \leq   \sqrt{2} \sum_{i={k}_{1} + 1}^{k_{1} + {k}_{2}} (\lambda^{2\alpha}_{i} - \lambda^{2\alpha}_{i+1}) \norm{\mat{Z}\T_{\cdot,1:i} \mat{X}_{\cdot,i:n}} \nonumber \\
 &+ \sqrt{\sum_{i={k}_{1}+1}^{{k}_{1}+{k}_{2}} (\lambda^{2\alpha}_{i} - c^{2}_{2,i-{k}_{1}} \nu^{2\alpha}_{i-{k}_{1}})^{2}} .
\end{align}
}%

Collecting all the terms in \eqref{eq:T1}, \eqref{eq:t2}, \eqref{eq:t3} and \eqref{eq:T3} we can obtain the R.H.S. in \eqref{eq:PIP-ME}.
\end{proof}

\subsection{$N (> 2)$ Source Extensions of \autoref{th:main}}
\label{sec:n-ext}

By decomposing ideal and concatenated matrices source-wise, \autoref{th:main} can be extended for $N$ source embeddings as follows.

{\small
\begin{align}
&\norm{\PIP{\mat{E}} - \PIP{\hat{\mat{E}}}} \nonumber \\
& \leq \sqrt{\sum_{i={\theta}_{N+1}}^{d} \lambda^{4\alpha}_{i}} \nonumber \\ 
&+ \sum_{j=1}^{N+1} \biggl( \sqrt{2}\sum_{i=1+{\theta}_{j}}^{{\theta}_{j+1}} (\lambda^{2\alpha}_{i} - \lambda^{2\alpha}_{i+1}) \norm{\mat{U}^{(j)}_{\cdot,1:i}\T\mat{U}_{\cdot,i:n}} \nonumber \\
&+ \sqrt{\sum_{i=1+{\theta}_{j}}^{\theta_{j+1}} \left(\lambda^{2\alpha} - c^{2}_{j,i}\mu_{i}^{(j)2\alpha}\right)^{2}} \biggr) \label{eq:mult}
\end{align}
}%

Here, we have defined
\begin{align}
\theta_{j} = \begin{cases} 0 & j = 1 \\ \sum_{r=1}^{j-1} k_{r} & \text{otherwise} \end{cases}
\end{align}
and $\mu^{(j)}_{1}, \ldots, \mu^{(j)}_{{k}_{j}}$ is the spectrum of the $j$-th source embedding $s_{j}$.
Proof of \eqref{eq:mult} is similar to \autoref{th:main} and requires adding terms that account for the interaction between the ideal meta-embedding and each source embedding.

\section{Source Embeddings}
\label{supp:sec:sources}

We train source embeddings using three different corpora as follows:
\textbf{GloVe} embeddings trained on the Wikipedia Text8 corpus~\cite{text8} (first 1GB of English Wikipedia covering 17M tokens), 
\textbf{SGNS} embeddings trained on the WMT21 English News Crawl (206M tokens)~\footnote{\url{http://statmt.org/wmt21/translation-task.html}},
\textbf{LSA} embeddings trained on an Amazon product reviews\footnote{\url{https://nijianmo.github.io/amazon/index.html}} corpus~\cite{ni-etal-2019-justifying} (492M tokens). 
For GloVe, the elements of the signal matrix $M_{ij}$ are computed as $\log(X_{ij})$ where $X_{ij}$ is the frequency of co-occurrence between word $w_{i}$ and $w_{j}$.
The elements of the signal matrix for SGNS are set to $M_{ij} = \mathrm{PMI}(w_{i}, w_{j}) - \log\beta$, where $\mathrm{PMI}(w_{i}, w_{j})$ is the PMI between $w_{i}$ and $w_{j}$.
For LSA, the elements of its signal matrix are set to $M_{ij} = \max(\mathrm{PMI}(w_{i}, w_{j}), 0)$.
Source embeddings are created by applying SVD on the signal matrices. 
The optimal dimensionalities for GloVe, SGNS and LSA embeddings are respectively 1714, 185 and 222 and $\sigma$ values are respectively $0.0891$, $0.3661$ and $0.3604$.


\section{Evaluation Datasets}
\label{supp:sec:benchmarks}

In this section, we describe the different NLP tasks and the benchmark datasets used for evaluating the meta-embeddings in the paper.

\paragraph{Word similarity measurement:} 
The correlation between the semantic similarity scores computed using word embeddings, and the corresponding human similarity ratings has been used in prior work to evaluate the accuracy of word embeddings.
We compute the Spearman correlation coefficient between the predicted and human-rated similarities for the word-pairs in several datasets: 
Word Similarity-353 (WS)~\cite{WS353},
Rubenstein and Goodenough's dataset (RG)~\cite{RG},
rare words dataset (RW)~\cite{Luong:CoNLL:2013} and
Multimodal Distributional Semantics (MEN)~\cite{MEN}.
Moreover, we use the MTurk-771~\cite{Halawi:2012} as a development dataset for estimating various hyperparameters in the source embeddings and in previously proposed meta-embedding learning methods.
Fisher's $z$ statistic is used to determine statistical significance under the $p<0.01$ confidence level.

\paragraph{Word analogy detection:} Using the CosAdd method~\cite{Levy:CoNLL:2014}, we solve word-analogy questions in
the Google analogy dataset (GL)~\cite{Mikolov:NIPS:2013} and Microsoft Research syntactic analogy dataset (MSR)~\cite{Mikolov:NAACL:2013}.
Specifically, for three given words $a$, $b$ and $c$, we find a fourth word $d$ 
 such that the cosine similarity between $(\vec{b} - \vec{a} + \vec{c})$ and $\vec{d}$ is maximised. 
 We use Clopper-Pearson confidence intervals~\cite{Clopper-Pearson} computed at $p<0.05$ significance level to determine statistical significance.
 

\paragraph{Sentiment classification:}  
We evaluate meta-embeddings in a sentiment classification task using three datasets:
Movie reviews dataset (MR)~\cite{Pang:ACL:2005},
customer reviews dataset (TR)~\cite{Hu:KDD:2004} and
opinion polarity (MPQA)~\cite{MPQA}.
Given a review, we represent it as the centroid of the unigram embeddings and train a binary logistic regression classifier with a cross-validated $\ell_{2}$ regulariser using the train split of each dataset.
Next, we evaluate its classification accuracy on the test split in each dataset.
 We use Clopper-Pearson confidence intervals~\cite{Clopper-Pearson} computed at the $p<0.05$ significance level to determine statistical significance.

\paragraph{Entailment}
We use the SICK dataset for entailment (SICK-E)~\cite{SICK}, which 
contains 5595 sentence pairs annotated with neutral, 1424 sentence pairs annotated with contradiction and 2821 sentence pairs annotated with entailment. 
We use a meta/source embedding to represent a word in a sentence and create a sentence embedding by averaging the word embeddings for the words in that sentence. 
Next, we train a multi-class classifier (neutral, contradiction and entailment as three classes) to predict the entailment between two sentences.
We use a neural network with one hidden layer, ReLU activations in the hidden layer and a softmax output layer for this purpose.
Dropout (using a rate of 0.5) is applied and the network is optimised via RMSProp. All parameters are tuned using the training and validation parts of the SICK-E and SNLI datasets.
Accuracy is measured as the percentage of the correctly predicted instances to the total number of instances in a dataset and reported as the evaluation measure.
We use Clopper-Pearson confidence intervals~\cite{Clopper-Pearson} computed at the $p<0.05$ significance level to determine statistical significance.

\paragraph{Semantic Textual Similarity:}
Given two snippets of text, semantic textual similarity (STS) captures the notion that some texts are more similar than the others, measuring their degree of semantic equivalence.
A series of STS datasets have been annotated under the SemEval workshops for multiple languages.
We use the datasets from  STS benchmarks: STS13~\cite{agirre-etal-2013-sem}, STS14~\cite{Agirre:SemEval:2014}, STS15~\cite{agirre-etal-2015-semeval} and STS16~\cite{agirre-etal-2016-semeval} in our experiments. STS Benchmark\footnote{\url{http://ixa2.si.ehu.es/stswiki/index.php/STSbenchmark}} compiles several STS datasets into a single dataset, which we also use in our expriments.

 SemEval 2015 Task 2 English subtask contains sentence pairs for the five categories: \emph{Headlines} (750 sentence pairs from newswire headlines), \emph{Image captions} (750 sentence pairs), Answers-student (750 sentence pairs selected from the BEETLE corpus~\cite{dzikovska-etal-2013-semeval}, which is a question-answer dataset collected and annotated during the evaluation of the BEETLE II tutorial dialogue system), Answers-forum (375 sentence pairs selected from Stack Exchange question answer websites), \emph{Belief} (375 sentence pairs from the Belief Annotation dataset (LDC2014E55)).
 
 SemEval 2016 Task 1 English subtask contains sentence pairs for the five categories: \emph{Headlines} (249 sentence pairs from newswire headlines), \emph{Plagiarism} (230 sentence pairs from short-answer plagiarism), \emph{Postediting} (244 sentence pairs from MT postedits), \emph{Ans-Ans} (254 sentence pairs from QA forum answers) and \emph{Quest-Quest} (209 sentence pairs from QA forum questions).
 Each sentence pair is rated between 0 and 5 for the semantic similarity between the two sentences.
 The goal is to evaluate how the cosine distance between two sentences correlate with human-labelled similarity score through Pearson and Spearman correlations.
  We adopt an unsupervised setting, where we do not fit any models to predict the similarity ratings.
 Instead,  we create sentence embeddings by averaging the meta-embeddings for the words in the sentence.
 Following the official evaluation protocol, we report the weighted average (by the number of samples in each subtask) of the Pearson and Spearman correlation.


For the experiments related to word similarity measurement, word analogy detection we use the 
RepsEval evaluation tool\footnote{\url{https://github.com/Bollegala/repseval}} and for the experiments related to entailment and semantic textual similarity we used the SentEval evaluation tool\footnote{\url{https://github.com/facebookresearch/SentEval}} to conduct the evaluations.

\section{Effect of the Robustness Parameter $\alpha$}
\label{sec:alpha}

The parameter $\alpha$ is a hyperparameter of the source word embedding algorithm and not a hyperparameter of any of the two (i.e. source-weighted or dimension-weighted) concatenated meta-embedding methods, which depend only on the eigenvalues of the ideal meta-embedding signal matrix and the signal matrix used by each of the source word embedding algorithms. 
However, the performance of the source embeddings depends on $\alpha$ and this will indirectly influence the performance of their meta-embedding.

Let us first discuss how $\alpha$ influences the performance of a source word embedding. 
Recall that a source embedding matrix $\mat{E}_{j}$ is obtained by decomposing a signal matrix (e.g. log co-occurrences in the case of GloVe) $\mat{M}_{j}$. 
Using the notation defined in the paper, $\mat{E}_{j} = f_{\alpha, {k}_{j}}(\mat{M}_{j}) = \mat{U}^{(j)}_{\cdot, 1:{k}_{j}}\mat{D}^{\alpha}_{1:{k}_{j}, 1:{k}_{j}}$.
Because we are retaining the largest $k_{j}$ singular values of $\mat{M}_{j}$ as the diagonal elements in the diagonal matrix $\mat{D}$, raising $\mat{D}$ to the power $\alpha$ corresponds to the rank $k_{j}$ approximation of $\mat{E}_{j}$ given by \eqref{eq:k-approx}.
\begin{align}
 \label{eq:k-approx}
 \mat{E}_{j} = \sum_{i=1}^{k_{1}}\vec{U}^{(j)}_{\cdot, i} \lambda^{\alpha}_{i}
\end{align}
When $\lambda_{i} \geq 1$, increasing $\alpha (> 0)$ results in emphasising the top left singular vectors in the approximation given by \eqref{eq:k-approx}. On the other hand, when $\lambda \in (0, 1]$, if $\alpha > 1$ increasing $\alpha$ will further diminish the corresponding singular vectors, while $\alpha \in [0, 1]$ will \emph{enlarge} those small directional components.
In this regard, the role played by $\alpha$ is analogous to that of the magnifying factor in a microscope, enlarging or shrinking the influence of the top singular vectors.
The actual values of the singular values will obviously depend on the signal matrix being decomposed. For symmetric co-occurrence matrices, the eigenvalues will be real but could be either positive or negative, and their absolute values could be either greater or smaller than 1. 
Therefore, the optimal value of $\alpha$ must be found for each embedding method separately.   

The earliest analysis of $\alpha$ that we are aware of is by \newcite{Caron:2001}. 
He considered unbounded $\alpha$ values for LSA embeddings (referred to as parameter $p$ in his paper). and reported that $\alpha \in [1,2]$ to produce optimal LSA embeddings and increasing this value beyond 2 or setting negative values to degrade performance in information retrieval systems. 
\newcite{Bullinaria:2012} found that optimal $\alpha$ to be both task and dataset dependent for LSA embeddings.
In their formulation of SGNS as a low-rank decomposition of a signal matrix with shifted PPMI, \newcite{Levy:NIPS:2014} used $\alpha = 0.5$ to obtain symmetric target and context word embeddings. 
They stated that \emph{while it is not theoretically clear why the symmetric approach is better for semantic tasks, it does work much better empirically}.
A theoretical explanation was provided by \newcite{Yin:2018} where they showed that when the dimensionality of the source embedding is large (i.e. when we have already selected the top left singular vectors covering more than half of the energy spectrum, which is mathematically equivalent to $\frac{\sum_{i=1}^{{k}_{j}} \lambda^{2}_{i}}{\sum_{i=1}^{d} \lambda^{2}_{i}} > 0.5$), then when $\alpha \in [0.5, 1]$ PIP loss will be robust to overparametrisation because the variance terms will decrease without significantly affecting the bias term.
We empirically evaluate the effect of $\alpha$ on GloVe, SGNS and LSA embeddings created via SVD on the respective signal matrices in \autoref{fig:alpha}.
We compute the Pearson correlation coefficient between the cosine similarity scores computed using the obtained word embeddings and human similarity ratings for the word-pairs in the MTurk-771~\cite{Halawi:2012} dataset. We see that $\alpha=0.5$ reports the highest correlation in all source embeddings.
This trend was observed for almost all benchmark datasets and tasks used in our experiments and we used $\alpha = 0.5$ for all source embeddings.

\begin{figure}[t]
\centering
\includegraphics[width=8cm]{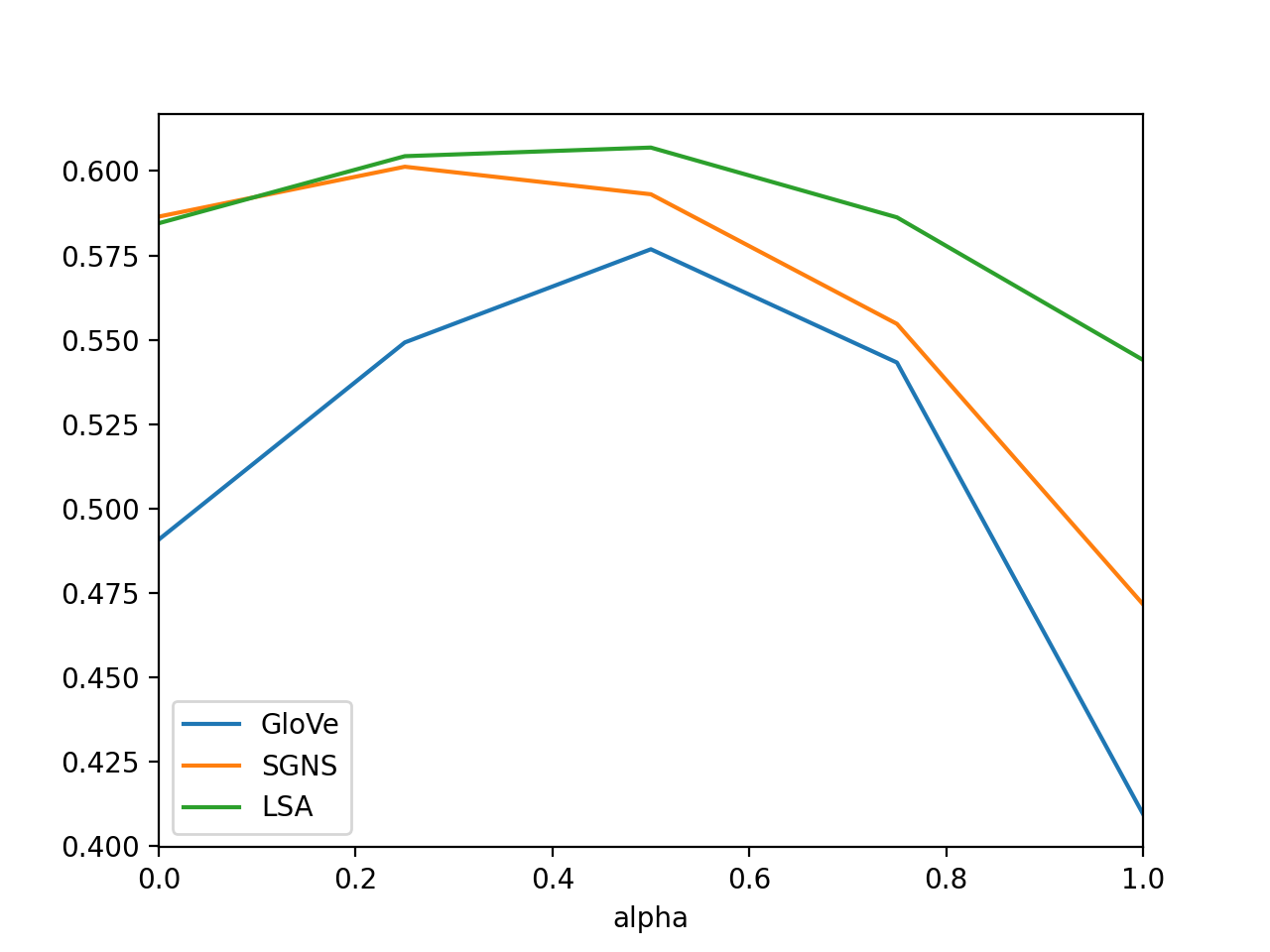}
\caption{Pearson correlation coefficient on MTurk-771 dataset for GloVe, SGNS and LSA embeddings computed with different $\alpha$ values. We see that all sources embeddings obtain their highest correlation when $\alpha=0.5$.}
\label{fig:alpha}
\end{figure}


\section{PIP loss and Optimal Dimensionality}
\label{sec:PIP}

For each of the three source embedding algorithms used in the experiments: GloVe, SGNS and LSA, we create their respective signal matrices and apply SVD with $\alpha = 0.5$ for different dimensionalities $k$. 
The PIP loss between the ideal embedding matrix and the embedding matrix obtained via matrix decomposition is estimated via the sampling procedure described in the paper and are shown for GloVe, SGNS and LSA respectively in Figures~\ref{fig:pip-glove}, \ref{fig:pip-sgns} and \ref{fig:pip-lsa} for different dimensionalities of the embeddings.
We see that PIP loss is convex and there is a unique global minimum corresponding to the optimal dimensionality as expected from a bias-variance trade-off.
Moreover, we see that the optimal dimensionality for GloVe (i.e. 736) is significantly large than that for SGNS (i.e. 121) and LSA (i.e. 119).
This is because the estimated noise for GloVe from the Text8 corpus is significantly smaller (i.e. 0.1472) compared to that for SGNS (i.e 0.3566) and and LSA (i.e. 0.3521) enabling us to fit more dimensions for GloVe for the same cost in bias.
However, when all source embeddings are trained on a much larger 1G text corpus extracted from Wikipedia as opposed to text8, which contains only the first 100MB from Wikiepdia, these differences in dimensionalities become less pronounced (i.e. Glove = 380, SGNS = 180, LSA = 161).

\begin{figure}[t]
\centering
\includegraphics[width=8cm]{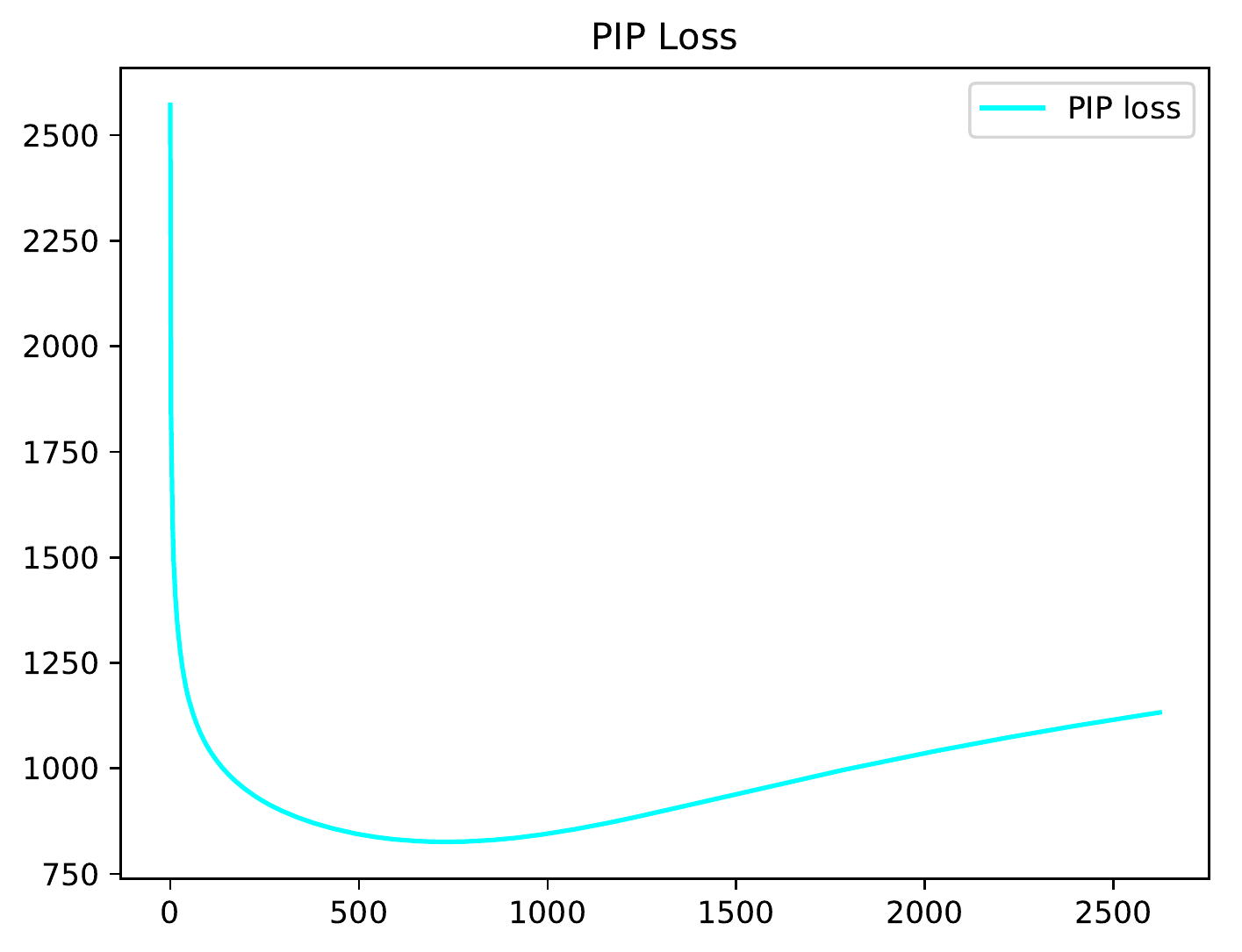}
\caption{PIP loss (in $y$-axis) against the dimensionality (in $x$-axis) for GloVe source embeddings computed with $\alpha=0.5$. The optimal dimensionality is 736.}
\label{fig:pip-glove}
\end{figure}
\begin{figure}[t]
\centering
\includegraphics[width=8cm]{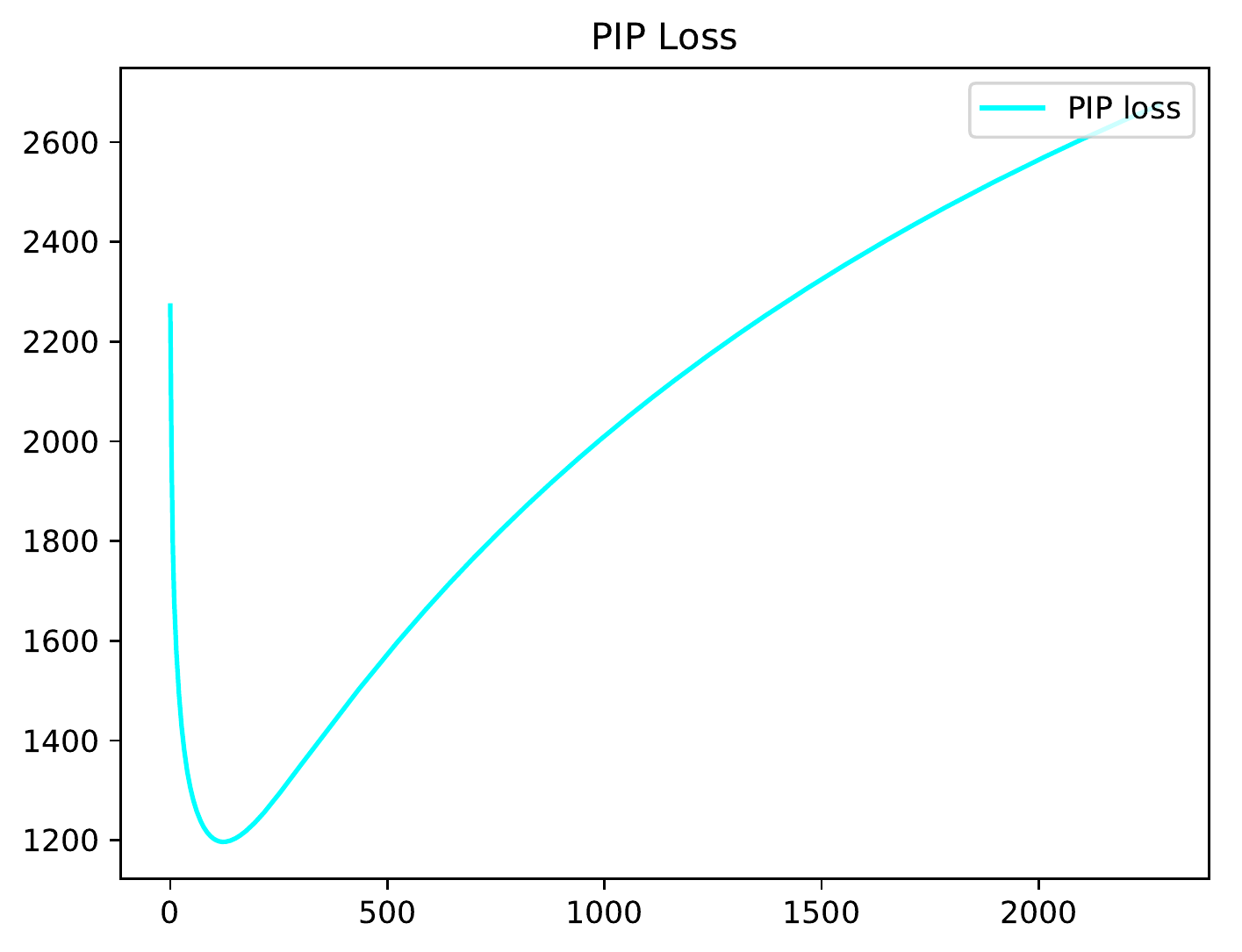}
\caption{PIP loss (in $y$-axis) against the dimensionality (in $x$-axis) for SGNS source embeddings computed with $\alpha=0.5$. The optimal dimensionality is 121.}
\label{fig:pip-sgns}
\end{figure}
\begin{figure}[t]
\centering
\includegraphics[width=8cm]{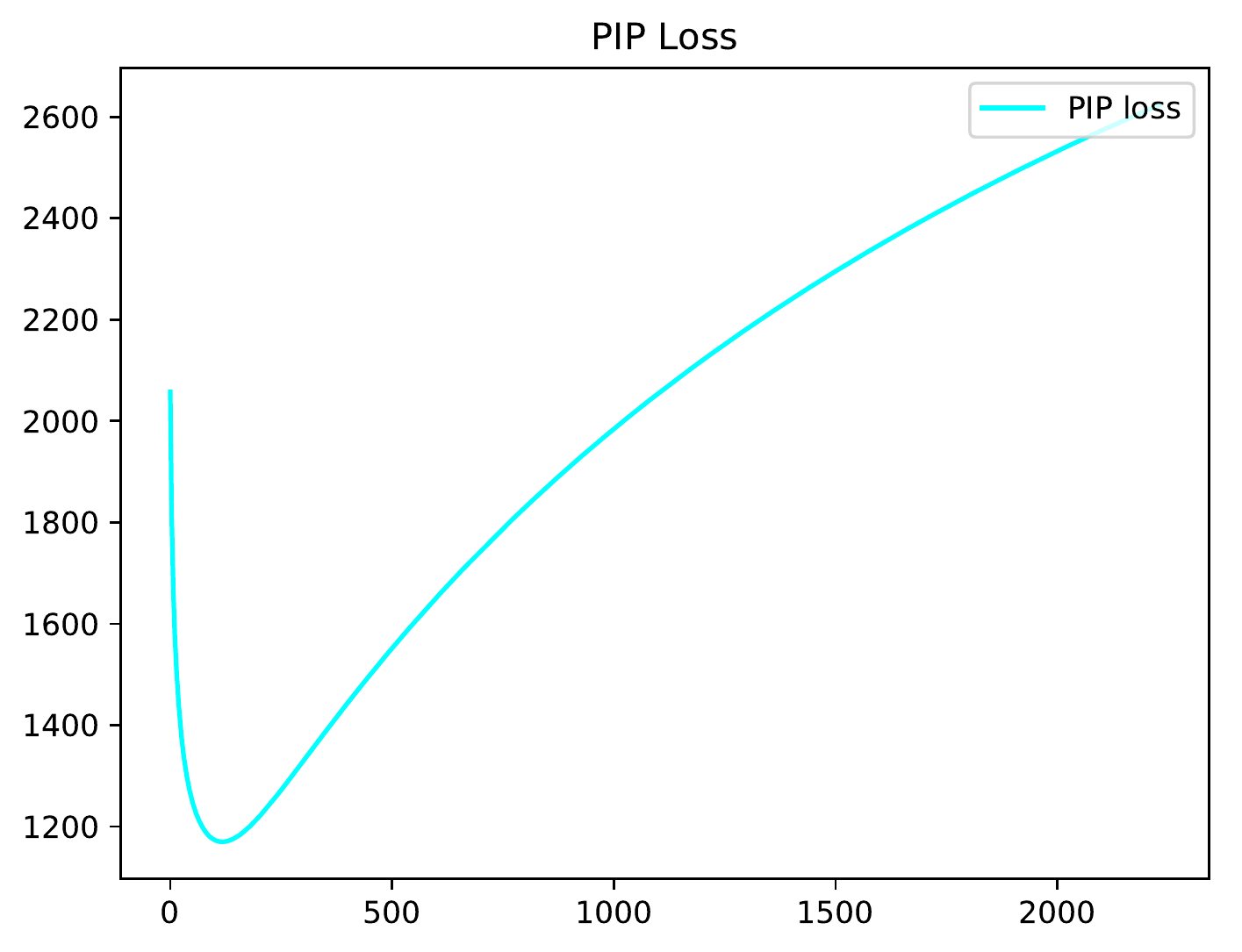}
\caption{PIP loss (in $y$-axis) against the dimensionality (in $x$-axis) for LSA source embeddings computed with $\alpha=0.5$. The optimal dimensionality is 119.}
\label{fig:pip-lsa}
\end{figure}

\section{Hyperparameters and Run time}
\label{sec:hyp}

\begin{table*}[t]
 \centering
 \begin{tabular}{l c c c c}\toprule
 Method & GloVe+LSA & GloVe+SGNS & LSA+SGNS & GloVe+LSA+SGNS \\ \midrule
 DAEME & 855 & 857 & 240 & 976 \\
 CAEME & 855 & 857 & 240 & 976 \\
 AAEME & 300 & 300 & 300 & 300 \\
 \bottomrule
 \end{tabular}
 \caption{Dimensionalities of the meta-embeddings created using autoencoded meta-embedding methods.}
 \label{tbl:AE-dim}
\end{table*}

All hyperparameters for the previously proposed meta-embedding learning methods were tuned on the MTurk-771 dataset~\cite{Halawi:2012}, such that the Pearson correlation coefficient measured between cosine similarities computed using the created meta-embeddings and human similarity ratings for the same word-pairs are maximised. 
For the autoencoded meta-embeddings~\cite{Bao:COLING:2018}, the dimensionality of the created meta-embeddings are shown in \autoref{tbl:AE-dim}.
Note that CAEME and DAEME produces meta-embeddings by concatenating source embeddings after applying a nonlinear encoder.
Therefore, the dimensionality of the meta-embedding is equal to the sum of input source embedding dimensionalities.
For AAEME, we vary the output dimensionality in the set ${50, 100, 200, 300, 400, 500, 600}$ and found that 300 to be the best.

For LLE~\cite{Bollegala:IJCAI:2018} we used a neighbourhood size of 2000 (found to be optimal among ${600, 1200, 2000}$) and created 200 dimensional embeddings (found to be optimal among ${100, 200, 300, 400, 450, 500, 600, 750}$), 
whereas for 1\texttt{TO}N~\cite{Yin:ACL:2016} the optimal dimensionality was 300 (found to be optimal among ${100, 200, 300, 400, 500, 600, 700, 800}$)
trained using stochastic gradient descent (learning rate of $0.1$ was found to be optimal among ${0.001, 0.01, 0.1, 1, 10, 100, 1000}$).
We used the publicly available implementations by the original authors for obtaining results for all previously proposed meta-embedding learning methods.

The average run times of both source-weighted and dimension-weighted meta-embedding learning algorithms is approximately 30 minutes (wall clock time),
measured on a EC2 p3.2xlarge instance with 1 Tesla V100 GPU, 16 GB GPU memory and 8 vCPUs.

\section{Visualisation of Embeddings}

\autoref{fig:tsne} shows t-SNE~\cite{tsne} visualisation for randomly selected positive and negative sentiment words from SentiWords~\cite{SentiWords} using DW meta-embeddings.
We see that overall positive (bottom) and negative (top) words are distributed in two groups, but at the same time semantically similar related words are projected close to each other.
A limitation of concatenation is the increase of dimensionality with the number of sources.
Learning lower dimensional unsupervised concatenated meta-embeddings is deferred to future work.

\begin{figure}
 \centering
 \includegraphics[width=78mm]{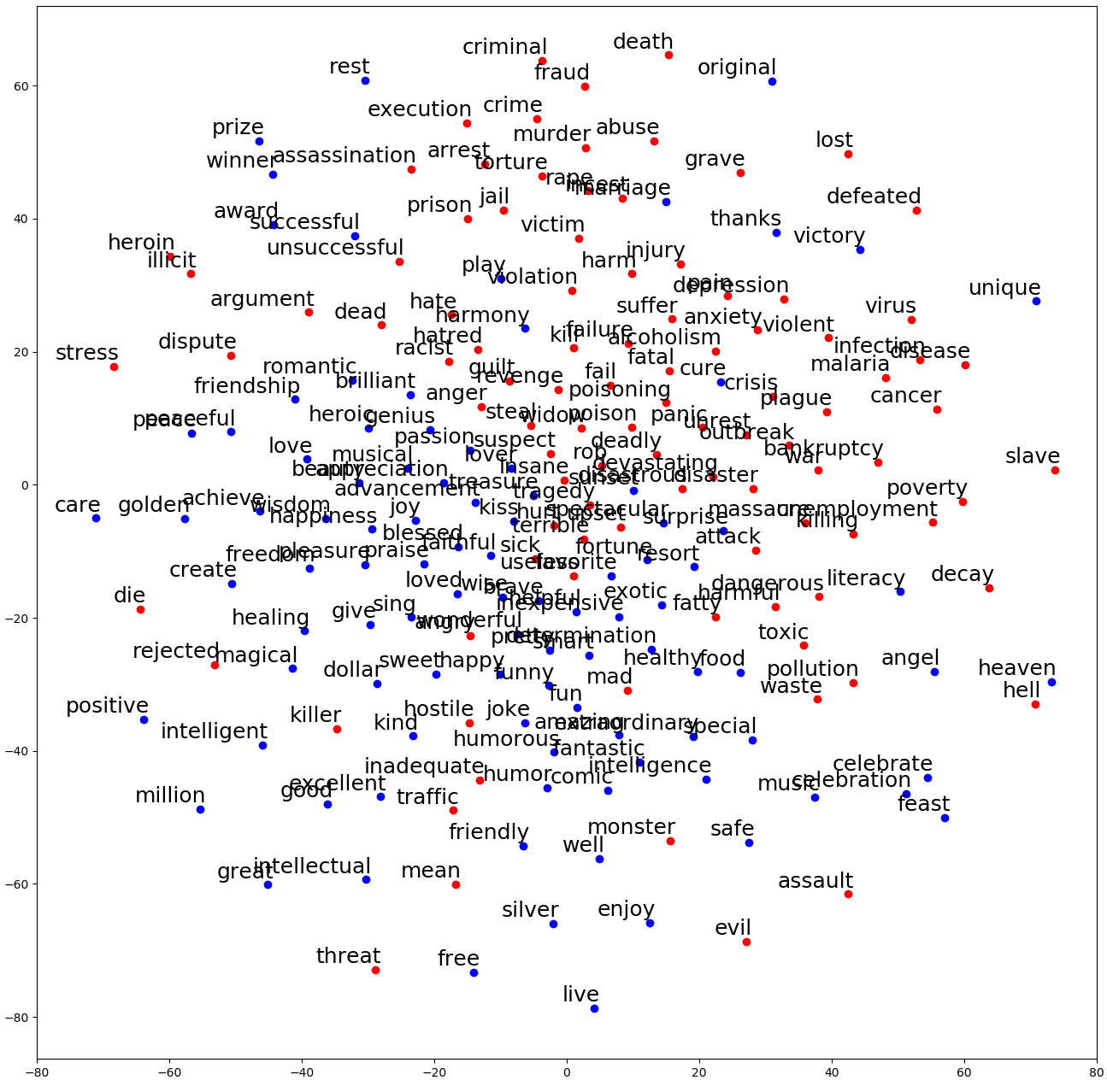}
 \caption{t-SNE visualisation of positive (blue) and negative (red) sentiment words using DW embeddings.}
 \label{fig:tsne}
\end{figure}

\section{Alternative Proof of Lemma 1}
\label{sec:alt-proof}

We first prove Lemma~\ref{lem:svd}.
\begin{lemma}
\label{lem:svd}
For orthogonal matrices $\mat{X}_{0} \in \R^{n \times k}$, $\mat{Y}_{1} \in \R^{n \times (n-k)}$, the SVD of their inner-product equals
\begin{align}
\label{eq:lem-svd}
\mathrm{SVD}(\mat{X}_{0}\T\mat{Y}_{1}) = \mat{U}_{0} \sin (\Theta){\tilde{\mat{V}}_{1}}\T
 \end{align}
 where $\Theta$ are the principal angles between $\mat{X}_{0}$ and $\mat{Y}_{0}$, the orthonormal complement of $\mat{Y}_{1}$.
\end{lemma}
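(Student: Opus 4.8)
The plan is to derive the stated factorisation from the CS-decomposition relationship between $\mat{X}_0\T\mat{Y}_0$ and $\mat{X}_0\T\mat{Y}_1$, exploiting the Pythagorean identity $\cos^2(\Theta)+\sin^2(\Theta)=\mat{I}_k$. Recall that the principal angles $\theta_1,\dots,\theta_k\in[0,\pi/2]$ between $\mathrm{range}(\mat{X}_0)$ and $\mathrm{range}(\mat{Y}_0)$ are defined so that $\cos(\Theta)=\mathrm{diag}(\cos\theta_1,\dots,\cos\theta_k)$ collects the singular values of the $k\times k$ matrix $\mat{X}_0\T\mat{Y}_0$; so I would first fix an SVD $\mat{X}_0\T\mat{Y}_0=\mat{U}_0\cos(\Theta)\mat{V}_0\T$ with $\mat{U}_0,\mat{V}_0\in\R^{k\times k}$ orthogonal. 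Since $\mat{Y}_0$ is the orthonormal complement of $\mat{Y}_1$, the matrix $[\mat{Y}_0,\mat{Y}_1]$ is orthogonal, hence $\mat{Y}_0\mat{Y}_0\T+\mat{Y}_1\mat{Y}_1\T=\mat{I}_n$, and also $\mat{X}_0\T\mat{X}_0=\mat{I}_k$.

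The key computation is the left Gram matrix of $\mat{X}_0\T\mat{Y}_1\in\R^{k\times(n-k)}$: using the two identities above, $(\mat{X}_0\T\mat{Y}_1)(\mat{X}_0\T\mat{Y}_1)\T=\mat{X}_0\T(\mat{I}_n-\mat{Y}_0\mat{Y}_0\T)\mat{X}_0=\mat{I}_k-\mat{U}_0\cos^2(\Theta)\mat{U}_0\T=\mat{U}_0\sin^2(\Theta)\mat{U}_0\T$. This eigendecomposition identifies the left singular vectors of $\mat{X}_0\T\mat{Y}_1$ as the columns of $\mat{U}_0$ and its singular values as $\sin\theta_i\ge 0$ (nonnegativity uses $\theta_i\in[0,\pi/2]$; the $\theta_i$ may be ordered so that $\sin(\Theta)$ is non-increasing), and it shows $\mat{X}_0\T\mat{Y}_1$ has rank $r:=\#\{i:\theta_i>0\}$.

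It remains to exhibit matching right singular vectors $\tilde{\mat{V}}_1\in\R^{(n-k)\times k}$. For each $i$ with $\sin\theta_i>0$ I would take the $i$-th column of $\tilde{\mat{V}}_1$ to be $(\sin\theta_i)^{-1}\mat{Y}_1\T\mat{X}_0(\mat{U}_0)_{\cdot,i}$; the Gram identity above then gives both $(\mat{X}_0\T\mat{Y}_1)\,(\tilde{\mat{V}}_1)_{\cdot,i}=\sin\theta_i\,(\mat{U}_0)_{\cdot,i}$ and orthonormality of these $r$ columns, while for the indices with $\sin\theta_i=0$ the term is annihilated and the remaining $k-r$ columns of $\tilde{\mat{V}}_1$ can be completed arbitrarily to an orthonormal system — the usual freedom in an SVD at a zero singular value. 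Reassembling columns yields $\mat{X}_0\T\mat{Y}_1=\mat{U}_0\sin(\Theta)\tilde{\mat{V}}_1\T$, which is the claimed SVD since $\mat{U}_0$ is orthogonal, $\tilde{\mat{V}}_1$ has orthonormal columns, and $\sin(\Theta)\ge 0$. The only point requiring care — the main, and rather mild, obstacle — is this degenerate case: one must check that the column space of $\mat{X}_0\T\mat{Y}_1$ is spanned by $\{(\mat{U}_0)_{\cdot,i}:\theta_i>0\}$ so that completing $\tilde{\mat{V}}_1$ does not disturb the product, which is exactly what the rank-$r$ Gram matrix $\mat{U}_0\sin^2(\Theta)\mat{U}_0\T$ guarantees. (Lemma~\ref{lem:decomp} is then immediate by taking Frobenius norms, since $\norm{\mat{X}_0\T\mat{Y}_1}_F^2=\sum_i\sin^2\theta_i$ whereas a short trace computation as in the main proof gives $\norm{\mat{X}_0\mat{X}_0\T-\mat{Y}_0\mat{Y}_0\T}_F^2=2\sum_i\sin^2\theta_i$.)
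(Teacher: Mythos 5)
Your proof is correct and follows essentially the same route as the paper: you form the Gram matrix $\mat{X}_{0}\T\mat{Y}_{1}\mat{Y}_{1}\T\mat{X}_{0}$, use $\mat{Y}_{0}\mat{Y}_{0}\T+\mat{Y}_{1}\mat{Y}_{1}\T=\mat{I}$ together with the SVD $\mat{X}_{0}\T\mat{Y}_{0}=\mat{U}_{0}\cos(\Theta)\mat{V}_{0}\T$ to obtain $\mat{U}_{0}\sin^{2}(\Theta)\mat{U}_{0}\T$, and read off the stated factorisation. The only difference is that you make explicit the construction of the right singular vectors and the handling of zero singular values, which the paper's proof leaves implicit.
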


\begin{proof}
Let us consider the eigendecomposition of $\mat{X}_{0}\T\mat{Y}_{1}(\mat{X}_{0}\T\mat{Y}_{1})\T$.
From the definition of orthonormal complement we have $\mat{Y}_{0}\mat{Y}_{0}\T + \mat{Y}_{1}\mat{Y}_{1}\T = \mat{I}$.
Therefore, we can substitute for $\mat{Y}_{1}\mat{Y}_{1}\T$ in the eigendecomposition to obtain:
\begin{align}
 \mat{X}_{0}\T\mat{Y}_{1}\mat{X}_{0}\T\mat{Y}_{1}\T &= \mat{X}_{0}\T(\mat{I} - \mat{Y}_{0}\mat{Y}_{0}\T)\mat{X}_{0} \\
 &= \mat{I} - \mat{U}_{0}\cos^{2}(\Theta)\mat{U}_{0}\T \label{eq:svd1} \\
 &= \mat{U}_{0}\sin^{2}(\Theta)\mat{U}_{0}\T \label{eq:svd2}
\end{align}
\eqref{eq:svd1} follows from the eigendecomposition of $\mat{Y}_{0}\mat{Y}_{0}\T$ where all eigenvalues are nonnegative and can be written as the square of some some cosines of angles, where norms are absorbed into $\mat{U}_{0}$.
Then, \eqref{eq:svd2} follows from the trigonometric identity, $\cos^{2}(\Theta) + \sin^{2}(\Theta) = 1$.
Therefore, $\mat{X}_{0}\T\mat{Y}_{1}$ has the singular value decomposition $\mat{U}_{0}\sin(\Theta){\tilde{\mat{V}}_{1}}\T$
\end{proof}

Now we prove Lemma~\ref{lem:decomp}.
\begin{proof}
Note $\mat{Y}_{0}=\mat{U}\mat{U}\T\mat{Y}_{0} = \mat{U} \left(
\begin{bmatrix}
    \mat{X}_{0}\T  \\
    \mat{X}_{1}\T
\end{bmatrix}\mat{Y}_{0} \right)$.
Therefore, we can write,
\begin{align*}
\mat{Y}_{0} \mat{Y}_{0}\T & = \mat{U} (\begin{bmatrix}
    \mat{X}_{0}\T  \\
    \mat{X}_{1}\T
\end{bmatrix} \mat{Y}_{0} \mat{Y}_{0}\T
\begin{bmatrix}
    \mat{X}_{0} & \mat{X}_{1}
\end{bmatrix})\mat{U}\T\\
& = \mat{U}
\begin{bmatrix}
    \mat{X}_{0}\T \mat{Y}_{0} \mat{Y}_{0}\T \mat{X}_{0} & \mat{X}_{0}\T \mat{Y}_{0} \mat{Y}_{0}\T \mat{X}_{1} \\
    \mat{X}_{1}\T \mat{Y}_{0} \mat{Y}_{0}\T \mat{X}_{0} & \mat{X}_{1}\T \mat{Y}_{0} \mat{Y}_{0}\T \mat{X}_{1}
\end{bmatrix}
\mat{U}\T
\end{align*}

Let $\mat{X}_{0}\T \mat{Y}_{0} = \mat{U}_{0}\cos(\Theta)\mat{V}_{0}\T$,  
$\mat{Y}_{0}\T \mat{X}_{1} = \mat{V}_{0} \sin(\Theta)\tilde{\mat{U}_{1}}\T$ by Lemma \ref{lem:svd}. For any unit invariant norm,
{\small
\begin{align}
&\norm{\mat{Y}_{0} \mat{Y}_{0}\T - \mat{X}_{0} \mat{X}_{0}\T}\\
=&\norm{\mat{U}
\left( \begin{bmatrix}
    \mat{X}_{0}\T \mat{Y}_{0} \mat{Y}_{0}\T \mat{X}_{0} & \mat{X}_{0}\T \mat{Y}_{0} \mat{Y}_{0}\T \mat{X}_{1} \\
    \mat{X}_{1}\T \mat{Y}_{0} \mat{Y}_{0}\T \mat{X}_{0} & \mat{X}_{1}\T \mat{Y}_{0} \mat{Y}_{0}\T \mat{X}_{1}
\end{bmatrix} -
\begin{bmatrix}
    \mat{I} & \mat{0} \\
    \mat{0} & \mat{0}
\end{bmatrix} \right)
\mat{U}\T} \label{eq:unit-1} \\
=&\norm{
\begin{bmatrix}
    \mat{X}_{0}\T \mat{Y}_{0} \mat{Y}_{0}\T \mat{X}_{0} & \mat{X}_{0}\T \mat{Y}_{0} \mat{Y}_{0}\T \mat{X}_{1} \\
    \mat{X}_{1}\T \mat{Y}_{0} \mat{Y}_{0}\T \mat{X}_{0} & \mat{X}_{1}\T \mat{Y}_{0} \mat{Y}_{0}\T \mat{X}_{1}
\end{bmatrix}-
\begin{bmatrix}
    \mat{I} & \mat{0} \\
    \mat{0} & \mat{0}
\end{bmatrix}
}\\
=&\norm{
\begin{bmatrix}
   \kern-5pt \mat{U}_{0}\cos^2(\Theta)\mat{U}_{0}\T & \kern-20pt \mat{U}_{0}\cos(\Theta)\sin(\Theta)\tilde{\mat{U}_{1}}\T \\
    \tilde{\mat{U}_{1}}\cos(\Theta)\sin(\Theta)\mat{U}_{0}\T & \kern-20pt \tilde{\mat{U}_{1}}\sin^2(\Theta)\tilde{\mat{U}_{1}}\T
\end{bmatrix}-
\begin{bmatrix}
    \mat{I} & \kern-5pt \mat{0} \\
    \mat{0} & \kern-5pt \mat{0}
\end{bmatrix}
}\\
=&\norm{
\begin{bmatrix}
    \mat{U}_{0} & \kern-10pt \mat{0} \\
    \mat{0} & \kern-10pt \tilde{\mat{U}_{1}}
\end{bmatrix}
\begin{bmatrix}
    -\sin^2(\Theta) & \kern-10pt \cos(\Theta)\sin(\Theta) \\
    \cos(\Theta)\sin(\Theta) & \kern-10pt \sin^2(\Theta)
\end{bmatrix}
\begin{bmatrix}
    \mat{U}_{0} & \mat{0} \\
    \mat{0} & \tilde{\mat{U}_{1}}
\end{bmatrix}^{\top}
}\\
=&\norm{
\begin{bmatrix}
    -\sin^2(\Theta) & \cos(\Theta)\sin(\Theta) \\
    \cos(\Theta)\sin(\Theta) & \sin^2(\Theta)
\end{bmatrix}
}\\
=&\norm{
\begin{bmatrix}
\sin(\Theta) & 0\\
0 & \sin(\Theta)
\end{bmatrix}
\begin{bmatrix}
    -\sin(\Theta) & \cos(\Theta) \\
    \cos(\Theta) & \sin(\Theta)
\end{bmatrix}
}\label{eq:unit-2}\\
=&\norm{
\begin{bmatrix}
\sin(\Theta) & 0\\
0 & \sin(\Theta)
\end{bmatrix}}
\end{align}
}%

In Lines~\ref{eq:unit-1} and \ref{eq:unit-2} we have used the fact that the norm is invariant under the multiplication of an orthogonal matrix.

On the other hand by the definition of principal angles we have
$\norm{\mat{X}_{0}\T\mat{Y}_{1}} = \norm{\sin(\Theta)}$. 
So we established the lemma. Specifically, we have
\[ \norm{\mat{X}_{0} \mat{X}_{0}\T - \mat{Y}_{0}\mat{Y}_{0}\T}_F = \sqrt{2}\norm{\mat{X}_{0}\T\mat{Y}_{1}}_F \]
\end{proof}
Without loss of soundness, we omitted in the proof sub-blocks of identities or zeros for simplicity. Interested readers can refer to classical matrix decomposition texts~\cite{Stewart:1990,Kato:2013} for a comprehensive treatment of this topic.

\bibliography{Embed}
\bibliographystyle{named}

\end{document}